  \providecommand\BibTeX{{%
    \normalfont B\kern-0.5em{\scshape i\kern-0.25em b}\kern-0.8em\TeX}}}
\gdef\@copyrightpermission{
  \begin{minipage}{0.3\columnwidth}
   \href{}{\includegraphics[width=0.90\textwidth]{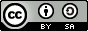}}
  \end{minipage}\hfill
  \begin{minipage}{0.7\columnwidth}
   \href{}{This work is licensed under a Creative Commons Attribution International 4.0 License.}
  \end{minipage}
  \vspace{5pt}
}
\newcolumntype{N}{@{}m{0pt}@{}}
\theoremstyle{definition}
\newtheorem{definition}{Definition}
\theoremstyle{theorem}
\theoremstyle{assumption}
\newtheorem{assumption}{Assumption}
\theoremstyle{proof}
\theoremstyle{remark}
\newtheorem*{remark}{Remark}
\newcolumntype{g}{>{\columncolor{Gray}}c}
\begin{document}
\title{Specify Robust Causal Representation from Mixed Observations}
\author{Mengyue Yang~}
 \affiliation{
  \institution{University College London}
  \city{London}
    \country{United Kingdom}
}
 \email{mengyue.yang.20@ucl.ac.uk}

\author{Xinyu Cai~}
 \affiliation{
  \institution{Nanyang Technological University}
  \city{Singapore}
    \country{Singapore}
}
 \email{xinyu.cai@ntu.edu.sg}

\author{Furui Liu~}
\authornote{Corresponding Author}
 \affiliation{
  \institution{Zhejiang Lab}
  \city{Hangzhou}
    \country{China}
}
 \email{liufurui@zhejianglab.com}

\author{Weinan Zhang~}
 \affiliation{
  \institution{Shanghai Jiao Tong University}
  \city{Shanghai}
    \country{China}
}
 \email{wnzhang@apex.sjtu.edu.cn}
 \author{Jun Wang~}
 \affiliation{
  \institution{University College London}
  \city{London}
    \country{United Kingdom}
}
 \email{j.wang@cs.ucl.ac.uk}

\renewcommand{\shortauthors}{Mengyue Yang, Xinyu Cai, Furui Liu, Weinan Zhang, \& Jun Wang}


\begin{abstract}
Learning representations purely from observations concerns the problem of learning a low-dimensional, compact representation which is beneficial to  prediction models.  Under the hypothesis that the intrinsic latent factors follow some casual generative models, we argue that by learning a causal representation, which is the minimal sufficient causes of the whole system, we can improve the robustness and generalization performance of machine learning models. In this paper, we develop a learning method to learn such representation from observational data by regularizing the learning procedure with mutual information measures, according to the hypothetical factored causal graph. We theoretically and empirically show that the models trained with the learned causal representations are more robust under adversarial attacks and distribution shifts compared with baselines. The supplementary materials are available at https://github.com/ymy4323460/CaRI/.

\end{abstract}

\begin{CCSXML}
<ccs2012>
   <concept>
       <concept_id>10010147.10010257.10010321</concept_id>
       <concept_desc>Computing methodologies~Machine learning algorithms</concept_desc>
       <concept_significance>500</concept_significance>
       </concept>
 </ccs2012>
\end{CCSXML}

\ccsdesc[500]{Computing methodologies~Machine learning algorithms}

\keywords{Causal representation learning, Robustness learning, Learning theory for Causal representation learning.}
\maketitle
\section{Introduction}
Causal representation learning is an effective approach for extracting invariant, cross-domain stable causal information, which is believed to be able to improve sample efficiency by understanding the underlying generative mechanism from observational data \cite{scholkopf2021toward, DBLP:journals/advcs/AyP08}. Causal representation learning is widely applied in many real-world applications like recommendation systems, search engines etc.\cite{sun2018recurrent, okura2017embedding, zhang2017joint, shi2018heterogeneous}. Recently, multiple approaches were proposed to learn the invariant causal representations, which are supposed to encode underlying causal generative systems describing the data, based on the problem-specific priors. 

The usual theory to implement it is called Independent Causal Machine (ICM) \cite{parascandolo2018learning} principle, which can be applied to identify the cause information when all factors are observable. However, when the variables are unobservable in general and complex systems, this method usually does not work. Given that most methods employ a generative model,
the main reason for such failure is due to the observation data (e.g. human images) is entangled by causal variables. To tackle this problem, previous works learned latent representations to capture the causal properties, e.g., causal disentanglement methods \cite{yang2021causalvae,shen2020disentangled} and invariant causal representation learning method \cite{arjovsky2019invariant, lu2021nonlinear}. However, additional information like causal variable labels and domain information should be provided, which is usually unavailable in real-world systems.

In this paper, we aim at disentangling the causal variables from an information theoretical view without providing additional supervision signals. Supposing that the factors are casually structured, we formalize a causal system as in Fig.\ref{fig:intro} (a), which is commonly accepted by the causality community \cite{wang2021desiderata, suter2019robustly}. Given the label $Y$, the $d$-dimensional observational data $\mathbf{X}$ is consist of causal factors including the parents $\mathbf{pa_Y}$, non-descendants $\mathbf{nd_Y}$, descendants $\mathbf{dc_Y}$ of $\mathbf{Y}$. The causal information $\mathbf{pa_Y}$ enables the model a better generalization and robustness for prediction tasks. We consider the natural data generative process as an information propagation along the causal graph and try to find out $\mathbf{pa_Y}$ from $\mathbf{X}$. Based on the causal modelling, we propose to learn latent representations which maintain the most necessary causal information for the prediction task, named minimal sufficient causal information of a system.

More specifically, we define the minimal sufficient cause (MSC) $\mathbf{Z}$ as a proxy of the parents in factor space as shown in Fig. \ref{fig:intro} (b). MSCs are variables that are specially positioned in the system, blocking the path from the causes and non-descendants to $Y$. In this paper, we implement it by an information-theoretical approach, reducing the traditional two-step procedure i.e. causal disentanglement and information minimizing, to an optimization problem that can directly learn a latent causal representation with minimal sufficiency from observations. Specifically, the proposed optimization problem is a bi-level optimization problem minimizing $I(\mathbf{Z}; \mathbf{pa_Y, nd_Y})$, with maximizing mutual information $I(\mathbf{Z}; Y)$ as a constraint. Based on this, we propose an intervention effect to accurately specify the causal information $\mathbf{pa_Y}$. We name this method as \textbf{CaRI} (learning \underline{Ca}use \underline{R}epresentation by \underline{I}nformation-theoretic approach) and we further extend the method under robustness learning framework.  Moreover, we theoretically analyze the sample efficiency of CaRI by giving a generalization error bound with respect to sample size. Experiments on synthetic and real-world datasets show the effectiveness of the proposed method.

\begin{figure*}[ht!]
    \begin{center}
    \centerline{\includegraphics[width=1.6\columnwidth]{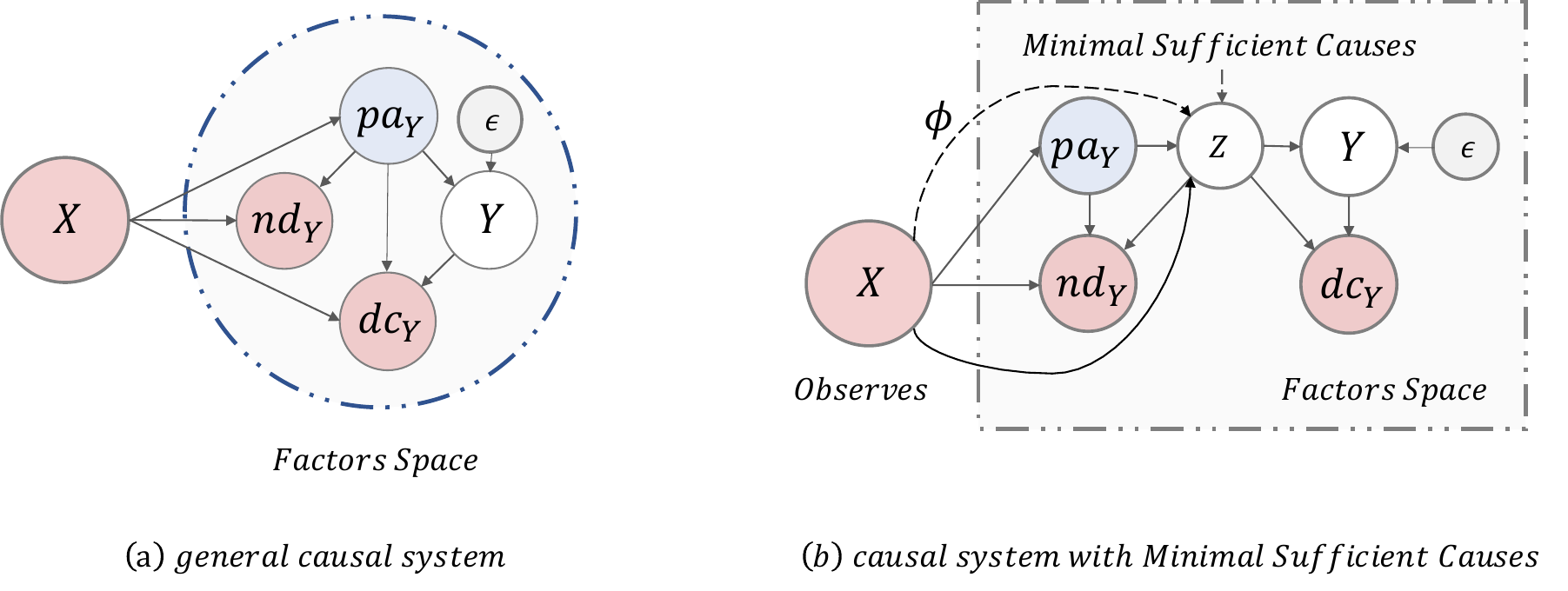}}
    \caption{The figure demonstrates a case of a causal system (a) and its extension of introducing minimal sufficient causes (b).}
    \label{fig:intro}
    \end{center}
\end{figure*}

The main contribution of this paper are summarized below:
\begin{itemize} [topsep = 3pt,leftmargin =10pt]
    \item We define minimal sufficient causes (MSC) in causal system by the formalization of an explicit causal graphical model to describe the data generative process of the real-world system and propose an information-theoretical approach to learn MSC from observational data.
    \item We theoretically analyze the sample efficiency of the learning approach by giving a generalization error bound w.r.t sample size. The theorem depicts a quantitative link between the amount of causal information contained in the learned representation and the sample complexity of the model on downstream tasks.
    \item We empirically verify that CaRI is able to generalize well distribution shift respectively and robust against adversarial attack.
\end{itemize}

\section{Related Works}
Causal Representation Learning is a set of approaches to finding generalizable representations by extracting and utilizing causal information from observational data. They usually aim at finding causal structure and causal variables behind observations. From several different perspectives, a bunch of methods have been proposed in the literature. 

\textbf{Causal Structure Learning.} To assess the connection between causally related variables in real-world systems, a bunch of traditional methods use the Markov condition and conditional independence between cause and mechanism principle (ICM) to discover the causal structure or distinguish causes from effect \cite{mooij2016distinguishing}. Several works focus on the asymmetry between cause and effect. \cite{sontakke2021causal, steudel2010causal, janzing2010causal, cover1999elements}, and similar ideas are utilized by \cite{parascandolo2018learning, steudel2010causal}. The series of works always assume that all the variable is observable. In contrast with these works, our proposed method is applicable to scenarios where the observed data is generated by hidden causal factors.

\textbf{Invariant Representation Learning Cross Multidomain.}
Some pioneering work \cite{zhou2021domain, wang2021desiderata,shen2021towards} considers the heterogeneity across multiple domains under the out-of-distribution settings \cite{gong2016domain, li2018domain, magliacane2017domain, zhang2015multi, lu2021nonlinear, rojas2018invariant, meinshausen2018causality, peters2016causal}. They learn causal representations from observational data by enforcing invariant causal mechanisms between the causal representation and the task labels across multi-domains. Similar to these works, we target obtaining invariant latent causal information but do not assume that the datasets are collected from multi-domains. 

\textbf{Causal Disentanglement Representation Learning.} Causal representation learning helps to reduce the dimension of the original high-dimensional input. Several works leverage structural causal models to describe causal relationships inside the entangled observational data \cite{yang2021causalvae, shen2020disentangled, wang2021desiderata} and learn to disentangle causal concepts from original inputs. Different from aforementioned works, the proposed method in this paper considers the causal information from the perspective of information theory \cite{DBLP:conf/icml/BelghaziBROBHC18, DBLP:conf/icml/ChengHDLGC20}. We put our attention on minimal causal information, which can be regarded as a compact representation of the whole underlying causal system. We also theoretically analyze the generalization ability from PAC learning frameworks \cite{shalev2014understanding, shamir2010learning} and explain why the causal representation can achieve better generalization ability from the perspective of sample complexity.

\section{Problem Definition}

\label{sec:main_obj}

\subsection{Notations}
Considering the causal scenario in Fig.\ref{fig:intro} (a), the observation data can be generated by the concepts in hidden space which contain multiple hidden causal variables. Denote $\mathbf{X}\in\mathcal{X}$ as $d$-dimensional observational data like context information or features in real-world systems, and ${Y}\in\mathcal{Y}$ as the labels of downstream tasks. Each pair of sample $(\mathbf{x,} y)$ is drawn i.i.d. from joint distribution $p(\mathbf{x},y)$. We use $\mathbf{pa_Y}\in\mathbb{R}^{p_1}$ to denote the variables including parent nodes of $Y$ in the causal graph, while $\bm{\epsilon}$ is the vector of independent noise with probability densities of $p_{\bm{\epsilon}}=\mathcal{N}(0, \beta I)$. Similarly, $\mathbf{dc_Y}\in\mathbb{R}^{p_2}$ and $\mathbf{nd_Y}\in\mathbb{R}^{p_3}$ denote the descendant and non-descendant nodes of $\mathbf{Y}$, respectively. In our method, we introduce minimal sufficient parents, denoted by $\mathbf{Z}\in\mathcal{Z}$ of the system.

Note that all the causal factors are assumed to be embedded in factors space, the observed data only contains $(\mathbf{X}, Y)$, where $\mathbf{X} = \mathbf{h}(\mathbf{pa_Y}, \mathbf{nd_Y},  \mathbf{dc_Y}), \mathbf{h}\in \mathcal{H}$ where $\mathbf{h}: \mathbb{R}^{p1+p2+p3}\rightarrow\mathbb{R}^{d}$ is a deterministic function.  In causal systems, the causes of prediction tasks are stable and robust, this means that when intervening on the parents, the causal effect is propagated to its child but not vice versa. 
All other correlated variables $\mathbf{nd_Y, dc_Y}$ in the causal system are regarded as spurious-correlated variables.

\subsection{Minimal Sufficient Causes (MSC)}
In our paper, we claim that not all the cause information is useful for prediction tasks. For example, considering a case of burning fire in a room, it is the presence of oxygen which explain the fire, but the match struck is definitely the necessary cause of fire. This real-world example is selected from section 9 in \cite{pearl2009causality}. From the perspective of finding the most useful causes from observational data, we introduce the minimal sufficient cause variable $\mathbf{Z}$ into the causal system. As Fig. \ref{fig:intro} (b) shows, the minimal sufficient causes $\mathbf{Z}$ are regarded as the proxy of parent variables. We define minimal sufficient causes in detail as below.
\begin{definition} \label{def:condition_independent}Assuming that the causal graph (Fig. \ref{fig:intro} (b)) with Minimal Sufficient Causes holds,
the Minimal Sufficient Cause blocks the path between $[\mathbf{pa_Y}, \mathbf{nd_Y}]$ and $Y$, and the following conditional independence condition holds:
\begin{equation}
\begin{split}
 (\mathbf{pa_Y}, \mathbf{nd_Y})\perp Y |\mathbf{Z}
\end{split}
\end{equation}
\end{definition}
\noindent Our goal is to identify the minimal sufficient information $\mathbf{Z}$ in hidden factors space. The minimum sufficient causal variable $\mathbf{Z}$ in a causal system is stable information for predicting $y$. From the perspective of sufficient causes, we define it from a probabilistic view, which is inspired by the minimal sufficient statistics \cite{lehmann2012completeness}.

\begin{definition}\label{def:sufficient}
(Sufficient Causes). Let $\mathbf{pa_Y, nd_Y, Z}, Y$ be random variables. $\mathbf{Z}$ is sufficient cause of $Y$ shown in Fig. \ref{fig:intro} (b) if and only if 
\begin{equation}
    \begin{split}
        \forall \mathbf{pa_y \in {pa_Y}}, \mathbf{nd_y \in nd_Y}, {y \in {Y}}; p(y|\mathbf{z, pa_y, nd_y}) = p(y|\mathbf{z})
    \end{split}
\end{equation}
\end{definition}
\noindent The definition of sufficient causes are the variables that are able to “produce” the causal system. From the perspective of minimal, we define a variable which can generate the whole the system with the minimum information. That is, all the variables of prediction task can be inferred if minimal sufficient causes are given. 
\begin{definition}\label{def:minimal_sufficient}
(Minimal Sufficient Causes). The sufficient cause $\mathbf{Z^*}$ is minimal if and only if for any sufficient cause $\mathbf{Z}$, there exists a deterministic function $\mathbf{f}$ such that $\mathbf{Z^* = f(Z)}$ almost everywhere w.r.t. $\mathbf{X}$.
\end{definition}
\noindent Definition \ref{def:minimal_sufficient} shows that the Minimal Sufficient Causes $\mathbf{Z}$ in the causal system is the variable containing minimal information from all parents.

\subsection{Learning MSC as Causal Representation from Observational Data}
\noindent This paper focuses on causal representation learning, which aims at finding a low-dimensional representation of observation benefiting for predicting $Y$. Fig. \ref{fig:intro} (a)(b) shows the causal system behind a prediction task, which uses observational data $\mathbf{X}$ to predict the target $Y$. The method is treated as a two stage process, and the first stage is to extract the representation from observational data. Let $\mathbf{Z} = \phi(\mathbf{X})$ denote representation extracted from original observation $\mathbf{X}$, where $\phi: \mathcal{X}\rightarrow\mathcal{Z}$ is the representation extraction function. The next stage is to use the representation to predict $Y$.

Now that we have formally defined Minimal Sufficiency, the basic objective is defined as learning a representation where all the information from minimal sufficient causes is included. The process is to model a flow of representation learning method and downstream prediction by satisfying Definition \ref{def:sufficient} \ref{def:minimal_sufficient}. The objective from Definition \ref{def:sufficient} is easy to be evaluated by common statistic methods, like independent testing by mutual information. However, it is very hard to get the minimal variable in Definition \ref{def:minimal_sufficient}. To evaluate the objectives in Definition \ref{def:sufficient} \ref{def:minimal_sufficient} in a unified framework, we utilize the information-theoretic ways since it can naturally combine Definition \ref{def:sufficient} and \ref{def:minimal_sufficient} by considering the information contained in MSC.

\section{Learning Minimal Sufficient Causal Representations}

In this section, we present a method to learn the minimal sufficient parent's information $\mathbf{Z}$ from observational data $\mathbf{X}$. The difficulty lies in distinguishing minimal sufficient cause $\mathbf{Z}$ from $\mathbf{X}$, when we only observe $\mathbf{X}$. We first analyze the information propagation among different causal variables under two typical causal graphs in hidden factors space, based on which we propose an objective function with mutual information constraints. Next, we extend our method by introducing do-operation, which can enhance the ability to distinguish causes if such information is not embedded in the observational data.

\subsection{Information-theoretic property of MSC in factor space}

An important fact is that in Fig.\ref{fig:intro} (b), the minimal sufficient causes in observational data $\mathbf{X}$  dominate the generative process of the causal system defined in Fig.\ref{fig:intro} (b). If there exists a mapping from $\mathbf{X}$ to $\mathbf{Z}$, it is a function that finds the minimal sufficient causes inside the causal system. We develop an algorithm to learn representations based on such hypothetical structure Fig. \ref{fig:intro} (b).   
Based on the definition of $\mathbf{Z}$, denoted by $I(\cdot, \cdot)$ the mutual information, we obtain the following Theorem (The proof is provided in supplementary material). 

\begin{theorem}\label{thm:sufficient_statistics}
Let $\mathbf{Z}\in\mathcal{Z}$, $\mathbf{Z}=\phi(\mathbf{X})$, $\mathbf{X} = \mathbf{h(pa_Y, nd_Y, dc_Y)}$ and $\mathbf{h}\in\mathcal{H}$ is an invertible function, $\mathbf{Z}$ is a minimal sufficient cause of the causal system demonstrated in Fig. \ref{fig:intro} (b) if and only if $\mathbf{Z}$ is an optimal solution of following objective
\begin{equation}
\begin{split}
    & \min_{\mathbf{Z}} I(\mathbf{Z; pa_Y, nd_Y})   \\
    & \ \ s.t.~~~\mathbf{Z} \in {\arg\max}_{\mathbf{Z'}} I(\mathbf{Z'}; Y)
\label{eq:thm1}
\end{split}
\end{equation}
\end{theorem}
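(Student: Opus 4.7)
My plan is to prove both directions of the equivalence by translating Definitions~\ref{def:sufficient} and~\ref{def:minimal_sufficient} into mutual-information identities and then reading off the two halves of the bi-level program~\eqref{eq:thm1}. The dictionary I will rely on is: sufficiency $(\mathbf{pa_Y},\mathbf{nd_Y})\perp Y\mid \mathbf{Z}$ from Def.~\ref{def:sufficient} is equivalent to $I(Y;\mathbf{pa_Y},\mathbf{nd_Y}\mid\mathbf{Z})=0$, and the deterministic relation $\mathbf{Z}^{*}=\mathbf{f}(\mathbf{Z})$ in Def.~\ref{def:minimal_sufficient} yields by the data processing inequality $I(\mathbf{Z}^{*};U)\le I(\mathbf{Z};U)$ for every random variable $U$. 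The invertibility of $\mathbf{h}$ lets me identify $\mathbf{X}$ with $(\mathbf{pa_Y},\mathbf{nd_Y},\mathbf{dc_Y})$, so any feasible encoder $\mathbf{Z}'=\phi'(\mathbf{X})$ may be read as a function of the latent decomposition; together with the graph of Fig.~\ref{fig:intro}(b), where $\mathbf{Z}$ is the sole parent of $Y$, this pins the upper bound $I(\mathbf{pa_Y},\mathbf{nd_Y};Y)$ that the arg max can attain.

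\textbf{Forward direction ($\Rightarrow$).} Assume $\mathbf{Z}$ is an MSC. For the constraint, I apply the chain rule to $I(\mathbf{Z},\mathbf{pa_Y},\mathbf{nd_Y};Y)$ in both orderings; Def.~\ref{def:sufficient} kills $I(\mathbf{pa_Y},\mathbf{nd_Y};Y\mid\mathbf{Z})$, and since $\mathbf{Z}$ is determined by the cause variables the other conditional term also vanishes, giving $I(\mathbf{Z};Y)=I(\mathbf{pa_Y},\mathbf{nd_Y};Y)$, which is the maximum value any $\mathbf{Z}'$ can achieve, so $\mathbf{Z}\in\arg\max_{\mathbf{Z}'}I(\mathbf{Z}';Y)$. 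For the objective, let $\mathbf{Z}'$ be any competing sufficient cause; minimality (Def.~\ref{def:minimal_sufficient}) supplies an $\mathbf{f}$ with $\mathbf{Z}=\mathbf{f}(\mathbf{Z}')$, and data processing gives $I(\mathbf{Z};\mathbf{pa_Y},\mathbf{nd_Y})\le I(\mathbf{Z}';\mathbf{pa_Y},\mathbf{nd_Y})$, so $\mathbf{Z}$ solves the minimisation in~\eqref{eq:thm1}.

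\textbf{Backward direction ($\Leftarrow$).} Assume $\mathbf{Z}$ solves~\eqref{eq:thm1}. Reversing the chain-rule argument, the constraint $I(\mathbf{Z};Y)=I(\mathbf{pa_Y},\mathbf{nd_Y};Y)$ forces $I(Y;\mathbf{pa_Y},\mathbf{nd_Y}\mid\mathbf{Z})=0$, which is the sufficiency of Def.~\ref{def:sufficient}. To recover minimality, given any sufficient cause $\mathbf{Z}'$ I consider the Markov chain $(\mathbf{pa_Y},\mathbf{nd_Y})\to \mathbf{Z}'\to \mathbf{Z}$ induced by the fact that both encoders are functions of $\mathbf{X}$: the minimum of $I(\cdot;\mathbf{pa_Y},\mathbf{nd_Y})$ among sufficient causes, combined with the strict convexity of mutual information in the transition kernel $p(\mathbf{z}\mid\mathbf{z}')$, forces this kernel to be degenerate, which is exactly the deterministic $\mathbf{f}$ demanded by Def.~\ref{def:minimal_sufficient}.

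\textbf{Main obstacle.} The delicate step is the minimality half of $(\Leftarrow)$: lifting a numerical information minimum to an almost-sure functional relation $\mathbf{Z}=\mathbf{f}(\mathbf{Z}')$. Mutual-information inequalities are not, a priori, functional equalities, and closing this gap requires either a Bahadur/Lehmann-style domination statement for sufficient statistics or a strict-convexity/Lagrangian argument ruling out non-degenerate stochastic kernels almost everywhere. A secondary subtlety I will check is the exact feasibility set of the arg max: the graph of Fig.~\ref{fig:intro}(b) together with invertibility of $\mathbf{h}$ are both needed so that the attained maximum genuinely equals $I(\mathbf{pa_Y},\mathbf{nd_Y};Y)$ rather than the potentially larger $I(\mathbf{X};Y)$ that would pick up information leaking back through the descendants $\mathbf{dc_Y}$; without this the equivalence as stated would not close.
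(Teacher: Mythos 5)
Your forward direction and the sufficiency half of the backward direction follow essentially the paper's route (data-processing inequalities plus the DPI equality condition), but two genuine gaps remain. First, you anchor the constraint at the value $I(\mathbf{pa_Y},\mathbf{nd_Y};Y)$, whereas the feasible set consists of (probabilistic) functions of $\mathbf{X}=\mathbf{h}(\mathbf{pa_Y},\mathbf{nd_Y},\mathbf{dc_Y})$, whose attainable maximum is $I(\mathbf{X};Y)$; since $\mathbf{dc_Y}$ carries information about $Y$ beyond $(\mathbf{pa_Y},\mathbf{nd_Y})$, these two values differ in general. You flag this yourself but never close it, and it infects the backward sufficiency step: from $I(\mathbf{Z};Y)=I(\mathbf{pa_Y},\mathbf{nd_Y};Y)$ the chain rule only gives $I(\mathbf{pa_Y},\mathbf{nd_Y};Y\mid\mathbf{Z})=I(\mathbf{Z};Y\mid\mathbf{pa_Y},\mathbf{nd_Y})$, and the right-hand side need not vanish because $\mathbf{Z}=\phi(\mathbf{X})$ depends on $\mathbf{dc_Y}$, not only on the ``cause variables'' as your forward argument asserts. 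The paper instead works with $\max_{\mathbf{Z'}\in\mathcal{F}(\mathbf{X})}I(Y;\mathbf{Z'})=I(Y;\mathbf{X})$, uses the hypothesized graph to assert the chain $Y-\mathbf{Z}-\mathbf{X}$ for a sufficient cause, and then reads $I(Y;\mathbf{X}\mid\mathbf{Z})=0\Rightarrow I(Y;\mathbf{pa_Y},\mathbf{nd_Y}\mid\mathbf{Z})=0$ from the equality case of DPI.

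Second, and more seriously, the minimality half of $(\Leftarrow)$ — which you correctly identify as the main obstacle — is not actually proved. The Markov chain $(\mathbf{pa_Y},\mathbf{nd_Y})\to\mathbf{Z'}\to\mathbf{Z}$ is not ``induced by the fact that both encoders are functions of $\mathbf{X}$''; two functions of the same $\mathbf{X}$ satisfy $(\mathbf{pa_Y},\mathbf{nd_Y})\to\mathbf{X}\to(\mathbf{Z},\mathbf{Z'})$, and the chain you invoke is essentially equivalent to the conclusion $\mathbf{Z}=\mathbf{f}(\mathbf{Z'})$ you are trying to establish. The subsequent appeal to ``strict convexity of mutual information in the kernel $p(\mathbf{z}\mid\mathbf{z}')$'' forcing a degenerate kernel is a sketch, not an argument: mutual information is convex, not strictly convex, in the channel in general, and a constrained minimum need not sit at a deterministic kernel. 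The paper closes exactly this step by contraposition with a Fisher--Neyman-type factorization (following the Shamir--Tishby sufficient-statistics argument): assuming $\mathbf{Z}$ is not a function of some sufficient cause $\mathbf{Z'}$, it constructs another sufficient cause $\mathbf{V}$ from equivalence classes of the factorization and shows $I(\mathbf{pa_Y},\mathbf{nd_Y};\mathbf{Z})\ge I(\mathbf{pa_Y},\mathbf{nd_Y};\mathbf{V})+I(\mathbf{pa_Y},\mathbf{nd_Y};\mathbf{Z}\mid\mathbf{Z'})$ with the last term strictly positive, contradicting optimality of $\mathbf{Z}$ in the minimization. Without such a construction (or the Bahadur/Lehmann-style statement you mention only as a possibility), your proof of the ``optimal solution $\Rightarrow$ minimal'' implication does not go through.
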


\noindent Theorem \ref{thm:sufficient_statistics} shows that we can identify the MSC by solving the min-max optimization problem. In real-world applications, the information of $\mathbf{nd_Y}$ and $\mathbf{dc_Y}$ may not be revealed, and the above objective function cannot be optimized directly. To get a tractable form, in the next section, we extend our optimization objective to observational space. We extend Eq. \ref{eq:thm1} to a tractable objective by scaling the mutual information terms in Eq. \ref{eq:thm1}. The way is to link the unrevealed variables $\mathbf{nd_Y, dc_Y}$ to observation $\mathbf{X}$. The following lemma can help us scale Eq. \ref{eq:thm1}.
\begin{lemma}\label{lem:ine_i}
Suppose the features and labels are $\mathbf{X}, Y$ respectively, where $\mathbf{X}$ deterministically consists of the minimal sufficient parents, descendants and non-descendant as $\mathbf{X} = \mathbf{h(pa_Y, nd_Y, dc_Y)}$. The following inequality holds if and only if $\mathbf{h}$ is an invertible deterministic function
\begin{equation}
I(\mathbf{Z; nd_Y, pa_Y})\le I(\mathbf{Z;X})
\end{equation}
\end{lemma}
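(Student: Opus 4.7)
The plan is to prove Lemma \ref{lem:ine_i} using two standard tools from information theory: the invariance of mutual information under bijective transformations, and the chain rule together with non-negativity of (conditional) mutual information.

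First, I would exploit the invertibility of $\mathbf{h}$. Because $\mathbf{X} = \mathbf{h(pa_Y, nd_Y, dc_Y)}$ is a deterministic bijection, the $\sigma$-algebras generated by $\mathbf{X}$ and by $(\mathbf{pa_Y, nd_Y, dc_Y})$ coincide, so for any $\mathbf{Z}$
\begin{equation}
I(\mathbf{Z; X}) \;=\; I(\mathbf{Z; pa_Y, nd_Y, dc_Y}).
\end{equation}
This step is crucial: if $\mathbf{h}$ were only a (non-invertible) deterministic map, the data processing inequality would give $I(\mathbf{Z;X}) \le I(\mathbf{Z; pa_Y, nd_Y, dc_Y})$, which goes in the wrong direction for what we want. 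Invertibility upgrades this inequality into equality and is therefore essential for the ``if'' part.

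Next, I would apply the chain rule of mutual information to split the joint term over all three causal components:
\begin{equation}
I(\mathbf{Z; pa_Y, nd_Y, dc_Y}) \;=\; I(\mathbf{Z; pa_Y, nd_Y}) \,+\, I(\mathbf{Z; dc_Y \mid pa_Y, nd_Y}).
\end{equation}
Since conditional mutual information is non-negative, the second term on the right is $\ge 0$, and combining with the equality from the previous step gives exactly
\begin{equation}
I(\mathbf{Z; pa_Y, nd_Y}) \;\le\; I(\mathbf{Z; X}),
\end{equation}
which is the desired inequality.

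For the ``only if'' direction, I would argue by contrapositive: if $\mathbf{h}$ is not invertible, then it collapses some configurations of $(\mathbf{pa_Y, nd_Y, dc_Y})$, so one can construct a joint distribution in which $\mathbf{X}$ drops information about $(\mathbf{pa_Y, nd_Y})$ specifically, yielding $I(\mathbf{Z;X}) < I(\mathbf{Z; pa_Y, nd_Y})$ for an appropriate choice of $\mathbf{Z} = \phi(\mathbf{pa_Y, nd_Y})$. I expect this direction to be the main obstacle, since making it fully rigorous requires an explicit construction of a counterexample distribution whenever $\mathbf{h}$ fails to be a bijection, and some care is needed to ensure the counterexample respects the causal-graph structure assumed by the paper. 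The forward (``if'') direction is a clean two-line argument, but the converse is where the subtlety lies and is likely why the assumption is stated as a biconditional rather than a one-sided implication.
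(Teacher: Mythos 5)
Your ``if'' direction is correct and is the natural argument: invertibility of $\mathbf{h}$ makes $\mathbf{X}$ and $(\mathbf{pa_Y, nd_Y, dc_Y})$ carry the same information, so $I(\mathbf{Z};\mathbf{X}) = I(\mathbf{Z};\mathbf{pa_Y, nd_Y, dc_Y})$, and the chain rule plus nonnegativity of conditional mutual information yields $I(\mathbf{Z};\mathbf{pa_Y, nd_Y}) \le I(\mathbf{Z};\mathbf{X})$. The paper does not spell out a separate proof of this lemma (its appendix only proves Theorem~\ref{thm:sufficient_statistics}, Proposition~\ref{thm:sufficient_statistics_mix} and the sample-complexity results), and the reasoning it does use there is of the same data-processing/invariance flavour, so on the forward direction you are aligned with the intended argument.

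The genuine gap is the ``only if'' direction, which you only sketch, and which in fact cannot be proved as you plan. First, a quantifier problem: constructing one joint distribution and one $\mathbf{Z}$ for which the inequality fails when $\mathbf{h}$ is non-invertible shows only that the inequality is not guaranteed universally; it does not show that the inequality fails whenever $\mathbf{h}$ is non-invertible, which is what a literal reading of the biconditional demands (and which is false, e.g.\ take $\mathbf{Z}$ independent of everything, so both sides vanish). Second, and more importantly, in the paper's own setting $\mathbf{Z}=\phi(\mathbf{X})$, so $(\mathbf{pa_Y, nd_Y, dc_Y})\rightarrow\mathbf{X}\rightarrow\mathbf{Z}$ is a Markov chain and the data processing inequality already gives $I(\mathbf{Z};\mathbf{pa_Y, nd_Y}) \le I(\mathbf{Z};\mathbf{pa_Y, nd_Y, dc_Y}) \le I(\mathbf{Z};\mathbf{X})$ for \emph{any} deterministic $\mathbf{h}$, invertible or not; invertibility is simply not necessary. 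Your planned counterexample requires $\mathbf{Z}$ to depend on $(\mathbf{pa_Y, nd_Y})$ other than through $\mathbf{X}$ (you write $\mathbf{Z}=\phi(\mathbf{pa_Y, nd_Y})$), which steps outside the paper's model. So you should either restrict the lemma to the one-sided implication you proved (or note that with $\mathbf{Z}=\phi(\mathbf{X})$ the inequality holds unconditionally), rather than attempt the converse; as written, the converse direction of your proposal would fail.
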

\begin{proposition}\label{thm:sufficient_statistics_mix}
Let $\mathbf{Z'}, \mathbf{Z}\in\mathcal{Z}$, $\mathbf{Z}=\phi(\mathbf{X})$, $\mathbf{X} = \mathbf{h(pa_Y, nd_Y, dc_Y)}$, $h\in\mathcal{H}$ $\mathbf{h}(\cdot)$ is  invertible function. When all the functions (lines in Fig. \ref{fig:intro}) between $\mathbf{pa_Y, nd_Y, dc_Y}$ are invertible $\mathbf{Z}$ is the minimal sufficient cause of the causal system demonstrated in Fig. \ref{fig:intro} (b) if and only if $\mathbf{Z}$ equals to the optimal solution of following objective
\begin{equation}
\begin{split}\label{eq:minmax_x}
     & \min_{\mathbf{Z}} I(\mathbf{Z; X}),  \\
     & \ \ s.t.~~~\mathbf{Z} \in {\arg\max}_{\mathbf{Z'}} I(\mathbf{Z'}; Y)
\end{split}
\end{equation}
\end{proposition}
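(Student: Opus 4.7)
The plan is to reduce Proposition~2 to Theorem~1 by leveraging the invertibility of $\mathbf{h}$ and of the structural functions among the hidden factors, so that the two optimization problems coincide on the constraint set $\{\mathbf{Z}:\mathbf{Z}\in\arg\max_{\mathbf{Z'}} I(\mathbf{Z'};Y)\}$. First I would rewrite $I(\mathbf{Z};\mathbf{X})$ using $\mathbf{X}=\mathbf{h}(\mathbf{pa_Y},\mathbf{nd_Y},\mathbf{dc_Y})$: since $\mathbf{h}$ is an invertible deterministic map, mutual information is preserved and
\begin{equation*}
I(\mathbf{Z};\mathbf{X})=I(\mathbf{Z};\mathbf{pa_Y},\mathbf{nd_Y},\mathbf{dc_Y})=I(\mathbf{Z};\mathbf{pa_Y},\mathbf{nd_Y})+I(\mathbf{Z};\mathbf{dc_Y}\mid\mathbf{pa_Y},\mathbf{nd_Y}).
\end{equation*}
The first summand is exactly the objective of Theorem~1; the task reduces to showing that the second summand is constant on the feasible set so that the minimizers coincide.

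To control the residual term, I would use the hypothesis that all structural functions in Fig.~1(b) are invertible. Since $\mathbf{dc_Y}$ is produced from $Y$ (together with independent noise) via an invertible map, $\mathbf{dc_Y}$ is information-equivalent to $Y$ once $(\mathbf{pa_Y},\mathbf{nd_Y})$ are fixed, giving
\begin{equation*}
I(\mathbf{Z};\mathbf{dc_Y}\mid\mathbf{pa_Y},\mathbf{nd_Y})=I(\mathbf{Z};Y\mid\mathbf{pa_Y},\mathbf{nd_Y})+C,
\end{equation*}
where $C$ depends only on the exogenous noise distribution and not on $\phi$. On the constraint set, combining the MSC sufficiency property $(\mathbf{pa_Y},\mathbf{nd_Y})\perp Y\mid\mathbf{Z}$ with the chain rule yields $I(\mathbf{Z};Y\mid\mathbf{pa_Y},\mathbf{nd_Y})=I(\mathbf{Z};Y)$, and this equals the constant $\max_{\mathbf{Z'}}I(\mathbf{Z'};Y)$ because $\mathbf{Z}$ is constrained to attain the maximum. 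Hence, restricted to the feasible set, $I(\mathbf{Z};\mathbf{X})$ differs from $I(\mathbf{Z};\mathbf{pa_Y},\mathbf{nd_Y})$ only by a constant.

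With that constant isolated, the equivalence follows in both directions. If $\mathbf{Z}$ is a minimal sufficient cause, Theorem~1 says it minimizes $I(\mathbf{Z};\mathbf{pa_Y},\mathbf{nd_Y})$ under the constraint, and hence also minimizes $I(\mathbf{Z};\mathbf{X})$ under the same constraint. Conversely, if $\mathbf{Z}$ solves Eq.~(\ref{eq:minmax_x}), the constant offset argument above means it must also minimize $I(\mathbf{Z};\mathbf{pa_Y},\mathbf{nd_Y})$ subject to the constraint, so Theorem~1 implies that $\mathbf{Z}$ is a minimal sufficient cause. Lemma~2 underwrites the inequality direction that guarantees feasibility is not lost when passing to the observation-space objective.

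The main obstacle I expect is rigorously identifying the two conditional mutual informations $I(\mathbf{Z};\mathbf{dc_Y}\mid\mathbf{pa_Y},\mathbf{nd_Y})$ and $I(\mathbf{Z};Y\mid\mathbf{pa_Y},\mathbf{nd_Y})$ up to a $\phi$-independent constant. This step hinges on the precise reading of ``all functions between $\mathbf{pa_Y},\mathbf{nd_Y},\mathbf{dc_Y}$ are invertible'': one must treat each structural assignment (including its noise input) as jointly invertible, so that conditional on $(\mathbf{pa_Y},\mathbf{nd_Y})$ the variable $\mathbf{dc_Y}$ carries exactly the information contained in $Y$ plus an exogenous noise coordinate. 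Once this information-equivalence lemma is in hand, the rest of the argument is a chain-rule bookkeeping exercise that matches the two min-max problems term by term.
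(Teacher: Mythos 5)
Your reduction strategy is reasonable in spirit, but the pivotal step is false as stated. After the chain-rule decomposition $I(\mathbf{Z};\mathbf{X})=I(\mathbf{Z};\mathbf{pa_Y},\mathbf{nd_Y})+I(\mathbf{Z};\mathbf{dc_Y}\mid\mathbf{pa_Y},\mathbf{nd_Y})$, you claim the residual equals $I(\mathbf{Z};Y\mid\mathbf{pa_Y},\mathbf{nd_Y})+C$ with $C$ independent of $\phi$. Writing $\mathbf{dc_Y}$ as an invertible function of $(Y,\bm{\epsilon})$ gives the residual as $I(\mathbf{Z};Y\mid\mathbf{pa_Y},\mathbf{nd_Y})+I(\mathbf{Z};\bm{\epsilon}\mid Y,\mathbf{pa_Y},\mathbf{nd_Y})$, and the second summand is emphatically \emph{not} $\phi$-independent: it measures how much noise information the encoder retains, and it varies across the feasible set. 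For instance $\phi=\mathrm{id}$ is always feasible (it trivially attains $\max_{\mathbf{Z'}}I(\mathbf{Z'};Y)$ by data processing) and retains essentially $H(\bm{\epsilon}\mid Y,\mathbf{pa_Y},\mathbf{nd_Y})$ of noise information, whereas a parsimonious sufficient representation retains little or none. Hence the two objectives do not differ by a constant on the feasible set, and the constant-offset argument that both directions of your equivalence rest on collapses; indeed, penalizing exactly this non-constant noise term is part of what distinguishes the observation-space objective $I(\mathbf{Z};\mathbf{X})$ from $I(\mathbf{Z};\mathbf{pa_Y},\mathbf{nd_Y})$. A secondary slip: under the sufficiency condition $(\mathbf{pa_Y},\mathbf{nd_Y})\perp Y\mid\mathbf{Z}$ one gets $I(\mathbf{Z};Y\mid\mathbf{pa_Y},\mathbf{nd_Y})=I(\mathbf{Z};Y)-I(\mathbf{pa_Y},\mathbf{nd_Y};Y)$, not $I(\mathbf{Z};Y)$; this particular term is still constant on the feasible set, so it is a minor error, but it signals that the chain-rule bookkeeping needs more care than the sketch suggests.

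For comparison, the paper does not attempt a constant-offset reduction at all. Its argument works at the level of the characterizations used in the proof of Theorem 1: using the blocking/conditional-independence structure of Fig.~1(b) together with invertibility of $\mathbf{h}$ and of the structural maps, it argues that the sufficiency condition $p(y\mid\mathbf{z},\mathbf{pa_y},\mathbf{nd_y})=p(y\mid\mathbf{z})$ holds if and only if $p(y\mid\mathbf{z},\mathbf{x})=p(y\mid\mathbf{z})$, so the Markov-chain lemmas behind Theorem 1 can be repeated with $\mathbf{X}$ substituted for $(\mathbf{pa_Y},\mathbf{nd_Y})$, yielding equivalence of the two optimization problems (with Lemma 1 providing the inequality $I(\mathbf{Z};\mathbf{nd_Y},\mathbf{pa_Y})\le I(\mathbf{Z};\mathbf{X})$). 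If you want to salvage your route, you would need to argue directly that a minimizer of $I(\mathbf{Z};\mathbf{X})$ over the feasible set simultaneously minimizes both summands of your decomposition, rather than asserting the second summand is constant.
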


\noindent From Theorem \ref{thm:sufficient_statistics_mix} we can substitute the terms including $\mathbf{nd_Y}$ and $\mathbf{pa_Y}$ by tractable mutual information term. 
For Eq. \ref{eq:minmax_x} in Proposition \ref{thm:sufficient_statistics_mix}, by defining $m(Y) = {\max}_{\mathbf{Z'}} I(\mathbf{Z'}; Y)$ and then reformulating $\mathbf{Z} \in {\arg\max}_{\mathbf{Z'}} I(\mathbf{Z'}; Y)$ as $I(\mathbf{Z}; Y) \geq m(Y)$, with plugging in $\mathbf{Z} = \phi(\mathbf{X})$ the optimization problem in Eq. \ref{eq:thm1} can be equivalently formulated as minimizing the following Lagrangian $\mathcal{L}(\phi, \lambda)$ such that
\begin{equation}\label{eq:delta}
\begin{split}
    \delta(\phi) = \mathcal{L}(\phi, \lambda) = I(\phi(\mathbf{X});\mathbf{X}) - \lambda I(\phi(\mathbf{X}); Y),
\end{split}
\end{equation}
where $\mathcal{L}(\phi, \lambda)$ is denoted as $\delta(\phi)$ for conciseness and $\lambda$ is a hyperparameter which is manually selected in practice. The object coincides with Information Bottleneck (IB) objective function \cite{shamir2010learning}. The difference is that IB is deduced from Rate Distortion Theorem in information theory, and it holds under the structure of the Markov Chain instead of a causal graph (i.e. Fig. \ref{fig:intro}). In this paper, the IB setting is generalized into causal space, by bridging minimal sufficient causes with root cause variables in the hypothetical causal graph. The detailed proof of Theorem \ref{thm:sufficient_statistics} and Proposition \ref{thm:sufficient_statistics_mix} in supplementary materials show the differences between our proposed method and IB.

\subsection{Distinguishing components by intervention effect}
The previous section illustrates a method to find $\mathbf{Z}$ on the factor and  observation level. Note that the objective function $\delta(\phi)$ (Eq. \ref{eq:delta}) given from Proposition \ref{thm:sufficient_statistics_mix} can find the minimal sufficient causes under the strong assumption. In real-world applications, if we use the information-theoretic objective, it is very hard to distinguish causes $\mathbf{pa_y}$ and spurious variable $\mathbf{dc_Y, nd_y}$ of $Y$ from the objective.

To release this problem, we introduce an intervention operation, denoted by $do(X=x)$ \cite{pearl2009causality} into our method. Intervention in
causality means the system operates under the condition
that certain variables are controlled by external forces. In hidden factor space, one of the differences between $\mathbf{Z}$ and $\mathbf{dc_Y, nd_y}$ is that if we intervene on the value of $\mathbf{Z}$, the causal effect will be delivered to its child $Y$, but the causal effect to $Y$ from the intervention conducted on child node $\mathbf{dc_Y}$ will be blocked. From such, let $\mathbf{\bar{x}}$ means the intervened value which not equals to $\mathbf{x}$, the following inequality describes intervention effect holds
\begin{equation}\label{eq:intervention}
\begin{split}
    &P(Y=y|do(\mathbf{Z = z}))-P(Y=y|do(\mathbf{Z = \bar{z}})) >\\
    &P(Y=y|do(\mathbf{dc_Y = dc_y})))-P(Y=y|do(\mathbf{dc_Y = \bar{dc_y}})) = \\
    &P(Y=y|do(\mathbf{nd_Y = nd_Y})))-P(Y=y|do(\mathbf{nd_Y = \bar{nd_Y}})) = 0
\end{split}
\end{equation}
Instead of conducting interventions on the parental variables in a real-world environment, we create a representation space $\mathcal{Z}$ where it supports simulation of the interventional manipulation on parents by intervening $\mathbf{Z}$ in the learned model. The functional interventional distributions $P(Y=y|do(\mathbf{Z = \phi(\hat{\mathbf{x}})}))$ can be identified from purely observational data $\mathbf{X}$ and $Y$ ( \cite{pearl2009causality, wang2021desiderata, puli2020causal}), 
\begin{equation}
\begin{split}
    &P(Y=y|do(\mathbf{Z = \phi(\hat{\mathbf{x}})})) \\
    &= \int_\mathbf{x}P(Y=y|\mathbf{x}, \hat{\mathbf{z}})|_{\hat{\mathbf{z}} = \phi(\hat{\mathbf{x}})}P(\mathbf{\hat{z}}|\mathbf{pa_y})P(\mathbf{pa_y}|\mathbf{x})P(\mathbf{X = x}) d\mathbf{x} \\
    &= E_\mathbf{x}[P(Y=y|\mathbf{ \hat{z}})]|_{\hat{\mathbf{z}}= \phi(\hat{\mathbf{x}})}
\end{split}
\end{equation}

\noindent Therefore in the representation space, we can directly maximize the intervention effect on the intervention space $\mathcal{Z}$ to satisfy Eq. \ref{eq:intervention}. To make the intervention effect easier to be evaluated in the mutual information process, we introduce an intervention variable $\mathbf{\bar{Z}}\in|\mathcal{Z}|$ and build an intervention network shown in Fig. 2, in which we first infer the representation $\mathbf{z}$ from observational data $\mathbf{x}$, based on which we can obtain the intervened value $\mathbf{\bar{z}} \not= \mathbf{z}$. Then we optimize the parameters in the model by maximizing the intervention effect term defined in mutual information language by 
\begin{equation}
\begin{split}
    \text{Intervention Effect} &= \int_{\mathbf{z}, y}p(\mathbf{z}, y)\log p(y|\mathbf{z}) dy d\mathbf{z}  \\
    &-\int_{y, \mathbf{z}}p(y, \mathbf{\bar{z}})\log p(y|\mathbf{\bar{z}}) dy d\mathbf{\bar{z}}  \\&= I(\mathbf{Z};{Y})-I(\mathbf{\bar{Z}};{Y})
\end{split}
\end{equation}
Integrating intervention effect and the objective function Eq. \ref{eq:delta}, the final objective is defined as below. The additional term $I(\mathbf{\bar{Z},} Y)$ is the key to evaluating the intervention effect. 

\begin{equation}\label{eq:ib}
\begin{split}
 L(\phi) = \min_\phi \underbrace{ I(\mathbf{Z};\mathbf{X}) -  (I(\mathbf{Z};{Y})}_{\text{(1)positive term}}-\underbrace{\lambda I(\mathbf{\bar{Z}};{Y}))}_{\text{(2)negative term}}
\end{split}
\end{equation}
To intuitively understand the final objective Eq. \ref{eq:ib}, we divide it into positive and negative parts. The first positive term aims at finding minimal causes, and it helps retain information from the prediction task and drops redundant information from the original input. For the negative term, it is used to distinguish causes from all correlated variables by decreasing the information overlapping between $Y$ and intervened representation $\mathbf{\bar{Z}}$.

\section{Practical Algorithms}
In this section, we provide the details of how to evaluate the mutual information term in Eq. \ref{eq:ib} and the alternative robust training process of our method.
\subsection{Implementation of $L(\phi)$}\label{sec:exp_steup}
In this paper, all objective functions are defined under mutual information formulation. We evaluate Eq.\ref{eq:ib} in two parts. The first positive part (Eq.\ref{eq:ib} (1)) is evaluated by the following parameterized objective, the variational estimation of mutual information \cite{alemi2016deep}:
\begin{equation}\label{eq:elbo}
\begin{split}
      &I(\textbf{Z}; \mathbf{X}) - \lambda I(\mathbf{Z}; Y)
    \\
    &\ge \lambda\mathbb{E}_{D}[\mathbb{E}_{\mathbf{z}\in q(\mathbf{z|x})}[\log p_{g}({y}|\mathbf{z})]-\mathcal{D_\text{KL}}(q_{\bm{\phi}} (\mathbf{z}|\mathbf{x})||p_{\bm{\theta}}(\mathbf{z}))] 
\end{split}
\end{equation}
For the negative term described in Eq. \ref{eq:ib}, the minimization process requires the upper bound of it \cite{cheng2020club}. The upper bound is formed below:
\begin{equation}\label{eq:upper_bound}
    I(\mathbf{\bar{Z}}, Y)\le \mathbb{E}_{p(y, \mathbf{ \bar{z}})}[\log (p(y |\mathbf{\bar{z}} ))]-\mathbb{E}_{y} \mathbb{E}_{\mathbf{\bar{z}}}[\log (p(y| \mathbf{\bar{z}} ))]
\end{equation}
Note that the expectation on second term in Eq. \ref{eq:upper_bound} requires marginal distribution $p(y)$ $p(\mathbf{\bar{z}})$ rather than joint distribution, therefore we independently sample $y$ in practice. 
The intervened network (Fig. 2) helps us calculate the value of $\mathbf{\bar{Z}}$ along two steps. The first step, we build a neural network to generate the transformation vector $\mathbf{T}\in\mathbb{R}^t$ from observational data $\mathbf{X}$, where ($\mathbf{t} = \mathbf{k(x)}$) and $\mathbf{k}: \mathcal{X}\rightarrow \mathbb{R}^t$ is a deterministic function modeled by neural network. The second step, the density of intervened $\mathbf{\bar{Z}}$ is calculated by $p(\mathbf{\bar{z}|z, t}) = \delta(\mathbf{\bar{z} = z + t})$, where $\delta(\cdot)$ is Dirac delta function. In experiments, if $\mathbf{t}$ is close to 0, it will decline performance of our method since original $\mathbf{z}$ is close to the intervened one $\mathbf{\bar{z}}$. To avoid this problem, we add an additional constraint $\min_\mathbf{k} |\mathbf{t^2-b}|^2$, where $b$ is a hyperparameter, in our experiments, we set $b=0.8$.
\subsection{Robust Learning under Adversarial Attack}
\noindent To enhance the robustness against potential exogenous variables or noises $\bm{\epsilon}$ and guarantee the robustness of the proposed method, we extend our method by incorporating adversarial learning. Considering the causal generative process as $\mathbf{Y} = f(\mathbf{pa_Y}, \bm{\epsilon})$, the $\bm{\epsilon}$ is regarded as a random noise perturbing the $\mathbf{pa_y}$ inside a ball with finite diameter.  We treat the inference approach as the process of adversarial attack \citep{szegedy2013intriguing, ben2009robust, biggio2018wild} and define the influence of exogenous variables as
\begin{equation}\label{eq:action_step}
\begin{split}
\mathbf{z}' = \mathbf{z} + \bm{\epsilon}, \mathbf{z}'\in \mathcal{B}(\mathbf{z}, \beta)\\
\mathbf{\bar{z}}' = \bar{\mathbf{z}}+ \bm{\epsilon}, \mathbf{\bar{z}}'\in\mathcal{B}( \mathbf{\bar{z}}, \beta)
\end{split}
\end{equation}

where $\mathcal{B}(\mathbf{z}, \beta)$ is Wasserstein ball, in which the $p$-th Wasserstein distance \citep{panaretos2019statistical} $\mathrm{W}_{p}$ \footnote{$\mathrm{W}_{p}(\mu, \nu)=\left(\inf _{\gamma \in \Gamma(\mu, \nu)} \int_{\mathcal{Z} \times \mathcal{Z}} \Delta\left(\boldsymbol{z}, \boldsymbol{z}^{\prime}\right)^{p} d \gamma\left(z, z^{\prime}\right)\right)^{1 / p}$, $\Gamma(\mu, \nu$ is the collection of all probability measures
on $\mathcal{Z} \times \mathcal{Z}$}between $z$ and $\mathbf{z}'$ is smaller than $\beta$. {$\mathbf{z'}$ and $\mathbf{\bar{z}'}$ integrate both intervention and exogenous information.}
We further define intervention robustness (IR) to measure the worst intervention results of the intervention term in Eq.\ref{eq:ib}. IR aims at finding the worst perturbation of $\mathbf{z}$ and $\mathbf{\bar{z}}$, which is formally defined below,

\begin{definition} (Intervention Robustness) \label{def:cv_term}
Let $\mathbf{\bar{Z}}'$ denote intervened variables on $\mathbf{Z}=\phi(\mathbf{X})$, $\forall \mathbf{z}'\in \mathcal{B}(\mathbf{z}, \beta), \mathbf{\bar{z}}'\in\mathcal{B}( \mathbf{\bar{z}}, \beta)$, $D$ and $D'$ denote datasets sample from $p(\mathbf{z', y})$ and $p(\mathbf{\bar{z}', y})$, the  intervention robustness is defined as
\begin{equation}
\begin{split}
    \min_{\mathbf{z}'\in \mathcal{B}(\mathbf{z}, \beta), \mathbf{\bar{z}}'\in \mathcal{B}(\mathbf{\bar{z}}, \beta)}IR_\mathcal{B} = \min_{\mathbf{z}'\in \mathcal{B}(\mathbf{z}, \beta)}I(Y;\mathbf{Z}') - \max_{\mathbf{\bar{z}}'\in \mathcal{B}(\mathbf{\bar{z}}, \beta)}I(Y;\mathbf{\bar{Z}}') 
\end{split}
\end{equation}

\end{definition}
\begin{remark}
The intervention robustness defines the worst intervention effect influenced by exogenous $\bm{\epsilon}$. For the representation $\mathbf{z}$, the term $\min_{\mathbf{z}'\in \mathcal{B}(\mathbf{z}, \beta)}I(Y;\mathbf{Z}')$ aims at find the perturbed $\mathbf{z}'$ around $\mathbf{z}$ with lowest mutual information $I(Y;\mathbf{Z}')$. For the transformed variable $\mathbf{\bar{z}}$, IV aims to find worst mutual information $\max_{\mathbf{\bar{z}}'\in \mathcal{B}(\mathbf{\bar{z}}, \beta)}I(Y;\mathbf{\bar{Z}}')$. Combining two worst mutual information together, the IR term  aims at finding the worst intervention effect perturbed by $\bm{\epsilon}$.
\end{remark}
Combining the IR term with the original objective $L(\phi)$, we get the final objective function optimized by minmax approach. Equivalently, we only need to optimize $I(\mathbf{Z'};Y)$ rather than $I(\mathbf{Z};Y)+I(\mathbf{Z'};Y)$ since if the worst case $I(\mathbf{Z}';Y)$ is satisfied, $I(\mathbf{Z};Y)$ is satisfied. The  robust optimization objective function is $L_\text{rb}({\phi})$, where
\begin{equation}\label{eq:l_rb}
\begin{split}
    &\max_{\phi} \min_{\mathbf{z}'\in \mathcal{B}(\mathbf{z}, \beta), \mathbf{\bar{z}}'\in \mathcal{B}(\mathbf{\bar{z}}, \beta)} L(\phi) + IR_\mathcal{B} \\
    \ge&\max_{\phi} \min_{\mathbf{z}'\in \mathcal{B}(\mathbf{z}, \beta), \mathbf{\bar{z}}'\in \mathcal{B}(\mathbf{\bar{z}}, \beta)} \underbrace{I(\mathbf{Z'};Y)-\lambda I(\mathbf{Z;X})}_{\text{(1) positive}} -\underbrace{I(\bar{\mathbf{Z}}';Y) }_{ \text{(2) negative}} \\
    =&L_\text{rb}(\phi)
\end{split}
\end{equation}

\noindent The inequality in above objective is due to $$I(\mathbf{Z'};Y)-I(\mathbf{\bar{Z}'};Y)\ge \min_{\mathbf{z}'\in \mathcal{B}(\mathbf{z}, \beta), \mathbf{\bar{z}}'\in \mathcal{B}(\mathbf{\bar{z}}, \beta)}IR_\mathcal{B}.$$ The robust method is learned by the minimax procedure. Literally, the minimization procedure helps to avoid the worst-case led by exogenous variable $\bm{\epsilon}$, because it maximizes the intervention robustness by adjusting the parameter of feature extractor $\phi$. The optimization  objective of the robust method can extract minimal sufficient causal representation from observation data with high robustness ability.

We train and evaluate the robust method by the adversarial attack on representation space.
We use PGD attack \cite{madry2017towards} with $\infty$-norm and $2$-norm to get intervened $\mathbf{z}'$ and $\mathbf{\bar{z}'}$. We set $p_{\bm{\theta}}(\mathbf{z})$ as $\mathcal{N}(y, 1)$ to avoid trivial representations. Then we use 
negative cross entropy to approximate mutual information. More implementation details are shown in the supplementary material.

\section{Why Causal Representation Can Enhance Generalization Ability}
In this section, we theoretically analyze the generalization property of causal representation by learning theory framework \cite{shalev2014understanding}. Learning theory contains a set of methodologies to show the upper bound of the gap between risk/error on training data and all possible data from the data distribution. These methods justify a generalization problem that whether a model learned from a small data set can be generalized to any unseen test data from data distribution. Instead of estimating the risk bound, we start from the perspective of information theory and follow the framework of information bottleneck \cite{shamir2010learning}. We provide a finite sample bound of the difference between ground truth and estimated one, which measures the generalization ability. The bound  the relationship between $I(\mathbf{Z}; Y)$ and its estimation $\hat{I}(\mathbf{Z}; Y)$. 
\subsection{The Generalization Error Bound of i.i.d. Data}
Here, we provide theoretical justification with the following theory (The proof is provided in supplementary material):

\begin{theorem}\label{thm:sample_complexity}
Let $\mathbf{Z}=\phi(\mathbf{X})$ where $\phi: \mathcal{X}\rightarrow\mathcal{Z}$ be a fixed arbitrary function, determined by a known conditional probability distribution $p(\mathbf{z|x})$. Let $m$ be sample size and $C$ is a constant. For
any confidence parameter $0<\delta<1$, it holds with a probability of at least $1 - \delta$, that

1. General case (The learned representation $Z$ contains correlated information)
    {\small{\begin{equation}
    \begin{aligned}
        &|I(Y ; \mathbf{Z})-\hat{I}(Y ; \mathbf{Z})| \\&\leq
        \frac{\sqrt{C \log (|\mathcal{Y}| / \delta)}\left(|\mathcal{Y}|\sqrt{|\mathcal{Z}|} \log (m)+\frac{1}{2} \sqrt{|\mathcal{Z}|} \log (|\mathcal{Y}|)\right)+\frac{2}{e}|\mathcal{Y}|}{\sqrt{m}}
    \end{aligned}
    \end{equation}}}
    where $m \geq \frac{C}{4} \log (|\mathcal{Y}| / \delta)|\mathcal{Z}|\mathrm{e}^{2}$
    
2. Ideal case (The learned representation $\mathbf{Z}$ contains information of causes)
    {\small{\begin{equation}
    \begin{split}
        &|I(Y ; \mathbf{Z})-\hat{I}(Y ; \mathbf{Z})|
        \\&\leq \frac{\sqrt{C \log (|\mathcal{Y}| / \delta)}\left(|\mathcal{Y}|\sqrt{\beta} \log (m)+\frac{1}{2} \sqrt{|\mathcal{Z}|} \log (|\mathcal{Y}|)\right)+\frac{2}{e}|\mathcal{Y}|}{\sqrt{m}}\nonumber
    \end{split}
    \end{equation}}}
    where $m \geq C \log (|\mathcal{Y}| / \delta)\beta\mathrm{e}^{2}$
\end{theorem}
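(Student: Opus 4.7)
}

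The plan is to mirror the finite-sample analysis of the information bottleneck in the style of Shamir--Sabato--Tishby, but to specialize the empirical-process step to exploit the minimal-sufficient-cause structure in the ideal case. First I would write
\begin{equation*}
 I(Y;\mathbf{Z}) \;=\; H(Y) - H(Y\mid\mathbf{Z}),
 \qquad
 \hat I(Y;\mathbf{Z}) \;=\; \hat H(Y) - \hat H(Y\mid\mathbf{Z}),
\end{equation*}
and apply the triangle inequality so that
\begin{equation*}
 \bigl|I(Y;\mathbf{Z})-\hat I(Y;\mathbf{Z})\bigr|
 \;\le\;
 \bigl|H(Y)-\hat H(Y)\bigr| \;+\; \bigl|H(Y\mid\mathbf{Z})-\hat H(Y\mid\mathbf{Z})\bigr|.
\end{equation*}
This reduces the problem to controlling the deviation of two plug-in entropy estimators built from $m$ i.i.d.\ samples.

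Second, I would bound $|H(Y)-\hat H(Y)|$ by the classical Paninski/Antos--Kontoyiannis inequality for the plug-in entropy estimator on a finite alphabet: its bias is $O(|\mathcal{Y}|/m)$ and, via McDiarmid with bounded differences of order $\log m /m$, it concentrates at rate $\sqrt{\log(1/\delta)}\log m/\sqrt{m}$. This already produces the $\tfrac{2}{e}|\mathcal{Y}|/\sqrt{m}$ bias-style term and the $\tfrac{1}{2}\sqrt{|\mathcal{Z}|}\log|\mathcal{Y}|/\sqrt{m}$ contribution after I couple it with the conditional piece below.

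Third, for the conditional term I would write
\begin{equation*}
 H(Y\mid\mathbf{Z}) \;=\; \sum_{\mathbf{z}} p(\mathbf{z})\, H(Y\mid\mathbf{Z}=\mathbf{z}),
\end{equation*}
and split the error into (i) the error in estimating $p(\mathbf{z})$ and (ii) the per-slice error in estimating $H(Y\mid\mathbf{Z}=\mathbf{z})$. For (i) I would use a uniform deviation bound on the empirical measure on $\mathcal{Z}$ (DKW-type, giving $\sqrt{|\mathcal{Z}|\log(1/\delta)/m}$). For (ii) I would apply the same plug-in entropy bound as before to each conditional, then take a union bound over the $|\mathcal{Z}|$ slices; this is where the factor $|\mathcal{Y}|\sqrt{|\mathcal{Z}|}\log m$ arises. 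A Hoeffding-type step using the lower bound $m \geq \tfrac{C}{4}\log(|\mathcal Y|/\delta)|\mathcal Z| e^2$ is what makes the per-slice estimates simultaneously valid and assembles the $\sqrt{C\log(|\mathcal{Y}|/\delta)}$ prefactor in the statement.

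Finally, for the ideal case I would argue that when $\mathbf{Z}$ carries only the minimal sufficient causal information, its effective alphabet (or, in the continuous formulation, its $\epsilon$-covering number under the Gaussian noise model with covariance $\beta I$) collapses from $|\mathcal{Z}|$ to $\beta$. Concretely, by Definition \ref{def:minimal_sufficient} any further deterministic quotient of $\mathbf{Z}$ preserves sufficiency, so one may replace $\mathbf{Z}$ by its minimal representative, whose effective support is governed by the noise scale $\beta$ rather than the ambient dimension. Substituting $\beta$ for $|\mathcal{Z}|$ everywhere the union bound over slices appears gives the sharper inequality, together with the correspondingly relaxed requirement $m\ge C\log(|\mathcal{Y}|/\delta)\beta e^2$. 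I expect the main obstacle to be precisely this last step: cleanly justifying the replacement of $|\mathcal{Z}|$ by $\beta$ in the concentration step, because it requires linking the information-theoretic minimality of $\mathbf{Z}$ to a covering-number statement on the representation space, and then showing that the plug-in entropy bounds transfer without loss from discrete alphabets to this covering-based notion of effective size.
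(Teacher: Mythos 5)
Your skeleton matches the paper's opening moves: the triangle-inequality split $|I(Y;\mathbf{Z})-\hat I(Y;\mathbf{Z})|\le|H(Y)-\hat H(Y)|+|H(Y\mid\mathbf{Z})-\hat H(Y\mid\mathbf{Z})|$ and the further split of the conditional term into a piece weighted by $p(\mathbf{z})$ and a piece controlled by $\|\mathbf{p}(\mathbf{z})-\hat{\mathbf{p}}(\mathbf{z})\|$ are exactly what the paper does (following Shamir et al.). But the machinery you then invoke diverges in a way that matters for the stated bound. The paper does not union-bound over the $|\mathcal{Z}|$ slices with per-slice plug-in entropy bounds and DKW; it uses a dimension-free $L_2$ concentration for empirical distribution vectors, $\|\bm{\rho}-\hat{\bm{\rho}}\|\le(2+\sqrt{2\log(1/\delta)})/\sqrt{m}$, applied to only $|\mathcal{Y}|+2$ quantities (hence the $\log(|\mathcal{Y}|/\delta)$ factor, with no $\log|\mathcal{Z}|$), and the $\sqrt{|\mathcal{Z}|}$ enters through variance bounds such as $V(\hat{\mathbf{H}}(Y\mid\mathbf{z}))\le|\mathcal{Z}|\log^2(|\mathcal{Y}|)/4$, not through a union bound. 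Your route would leak an extra $\log(|\mathcal{Z}|/\delta)$ dependence into the confidence term and would not cleanly reproduce the $\frac{1}{2}\sqrt{|\mathcal{Z}|}\log(|\mathcal{Y}|)$ summand.

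The genuine gap is the ideal case, and it is precisely the step you flag as your main obstacle. The paper never argues that the effective alphabet or covering number of $\mathbf{Z}$ collapses from $|\mathcal{Z}|$ to $\beta$; no such argument is needed, and Definition \ref{def:minimal_sufficient} alone would not give you one. Instead, the paper inserts the exogenous noise $\bm{\epsilon}$ (with $p_{\bm{\epsilon}}=\mathcal{N}(0,\beta I)$, independent of $\mathbf{pa_Y}$) into the decomposition of the conditional law, writing $\hat p(y\mid\mathbf{z})-p(y\mid\mathbf{z})=\sum_{\bm{\epsilon}}p(\bm{\epsilon}\mid\mathbf{z})\left(\hat p(y\mid\mathbf{z},\bm{\epsilon})-p(y\mid\mathbf{z},\bm{\epsilon})\right)$, and then applying Cauchy--Schwarz so that every deviation term is weighted by $\sqrt{V(\mathbf{p}(\bm{\epsilon}\mid\mathbf{z}))}$. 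The case distinction is made entirely through this variance: for an arbitrary representation one can only say $V(\mathbf{p}(\bm{\epsilon}\mid\mathbf{z}))\le|\mathcal{Z}|/4$, whereas when $\mathbf{Z}$ carries exactly the causal (parental) information it is independent of $\bm{\epsilon}$, so $p(\bm{\epsilon}\mid\mathbf{z})=p(\bm{\epsilon})$ and $V(\mathbf{p}(\bm{\epsilon}\mid\mathbf{z}))\le\beta$, which simultaneously yields the $|\mathcal{Y}|\sqrt{\beta}\log(m)$ term and the relaxed sample-size requirement $m\ge C\log(|\mathcal{Y}|/\delta)\beta e^{2}$. Without introducing $\bm{\epsilon}$ into the estimation error at all, your proposal has no mechanism to make $\beta$ appear, so the ideal-case bound cannot be completed along the covering-number route you sketch.
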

\begin{remark}
The theorem provides a generalization bound under finite sample settings. It shows that when representation $\mathbf{Z}$ fully contains parent information $\mathbf{pa_Y}$, we achieve a sample complexity bound as $m \geq C \log (|\mathcal{Y}| / \delta)\beta\mathrm{e}^{2}$, where $\beta$ is the variance of $\bm{\epsilon}$. The minimum number of samples needed reduces from $|\mathcal{Z}|$ to $\beta$, which is a tighter bound since in most of cases we assume $|\mathcal{Z}|\gg\beta$. 
This shows that $\mathbf{z=pa_Y}$ gives the reduced sample complexity and tightened generalization bound. The theorem also serves as a general solution to causality prediction problems, supporting the claim that a better prediction is achieved with causal variables, compared to that with correlated variables. 

\subsection{The Generalization Error Bound when Distribution Shift Happens}
We also show additional generalization results. For the scenario of distribution sift, we define the mutual information on source domain as $I_\mathcal{S}(\mathbf{Z}, Y)$ and mutual information on target domain as $I_\mathcal{T}(\mathbf{Z}, Y)$. Denote joint distribution in source and target domain as  $\mathcal{S}(\mathbf{z}, y) = p_\mathcal{S}(\mathbf{z}, y)$ and $\mathcal{T}(\mathbf{z}, y) = p_\mathcal{T}(\mathbf{z}, y)$, separately.

\begin{assumption}\label{asp:distribution_shift}
The causal mechanism $p\mathbf{(y|z)}$ and causal representation $p(\mathbf{z})$ are stable under distribution shift such that $p_\mathcal{S}\mathbf{(y|z=\phi(x))}=p_\mathcal{T}\mathbf{(y|\mathbf{z}=\phi(\mathbf{x}))}$ and $p_\mathcal{S}(\mathbf{z}=\phi(\mathbf{x})) = p_\mathcal{T}(\mathbf{z}=\phi(\mathbf{x}))$, if $\mathbf{Z}$ is sufficient cause of $Y$. 
\end{assumption}
When the invariant assumption holds, we can connect $I_\mathcal{T}(Y ; \mathbf{Z})$ and $\hat{I}_\mathcal{T}(Y ; \mathbf{Z})$ by following theorem.
\begin{theorem}
Let $\mathbf{Z}=\phi(\mathbf{X})$ where $\phi: \mathcal{X}\rightarrow\mathcal{Z}$ be a fixed arbitrary function, determined by a known conditional probability distribution $p(\mathbf{z|x})$. Let $m$ be sample size and $C$ is a constant. In domain adaptation scenario, defining $D_{KL}(\mathcal{S}||\mathcal{T})>0$ as the Kullback-Leibler divergence between source domain and target domain. For
any confidence parameter $0<\delta<1$, it holds with a probability of at least $1 - \delta$, that

1. General case (The learned representation $Z$ contains correlated information)
{\small{\begin{equation}
\begin{split}
    &|I_\mathcal{T}(Y ; \mathbf{Z})-\hat{I}_\mathcal{T}(Y ; \mathbf{Z})| \\&\leq
    \frac{\sqrt{C \log (|\mathcal{Y}| / \delta)}\left(|\mathcal{Y}|\sqrt{|\mathcal{Z}|} \log (m)+ D_{KL}(\mathcal{T}||\mathcal{S}) + D I_\mathcal{S})\right)+\frac{2}{e}|\mathcal{Y}|}{\sqrt{m}}
\end{split}
\end{equation}}}
    
2. Ideal case (The learned representation $\mathbf{Z}$ contains information of sufficient causes of $Y$, Assumption \ref{asp:distribution_shift} holds)
{\small{\begin{equation}
    |I_\mathcal{T}(Y ; \mathbf{Z})-\hat{I}_\mathcal{T}(Y ; \mathbf{Z})| \leq
    \frac{\sqrt{C \log (|\mathcal{Y}| / \delta)}\left(|\mathcal{Y}|\sqrt{|\beta|} \log (m)+  D I_\mathcal{S})\right)+\frac{2}{e}|\mathcal{Y}|}{\sqrt{m}}
\end{equation}}}
where $D = \frac{2}{\min _{\mathbf{z}} p(\mathbf{z})}$ and $I_\mathcal{S} = \mathbb{E}_{\mathcal{S}(\mathbf{z}, y)} \frac{\hat{p}(\mathbf{z}, y)}{\hat{p}(\mathbf{z}) \hat{p}(y)}$
\end{theorem}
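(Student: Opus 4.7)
The plan is to reduce the distribution-shifted bound to Theorem~\ref{thm:sample_complexity} via a triangle-inequality decomposition that cleanly separates the finite-sample error (already controlled on the source) from the deterministic discrepancy between source and target induced by the shift. Concretely, I would write
\begin{align*}
&|I_\mathcal{T}(Y;\mathbf{Z}) - \hat{I}_\mathcal{T}(Y;\mathbf{Z})| \\
&\quad \leq |I_\mathcal{T}(Y;\mathbf{Z}) - I_\mathcal{S}(Y;\mathbf{Z})| + |I_\mathcal{S}(Y;\mathbf{Z}) - \hat{I}_\mathcal{S}(Y;\mathbf{Z})| + |\hat{I}_\mathcal{S}(Y;\mathbf{Z}) - \hat{I}_\mathcal{T}(Y;\mathbf{Z})|,
\end{align*}
labelling these terms (A), (B), (C) and bounding each separately.

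Term (B) is the most mechanical: I would apply Theorem~\ref{thm:sample_complexity} verbatim on the source distribution, which contributes $\sqrt{C\log(|\mathcal{Y}|/\delta)}\,|\mathcal{Y}|\sqrt{|\mathcal{Z}|}\log(m)/\sqrt{m}$ in the general case, and replaces $|\mathcal{Z}|$ by $\beta$ in the ideal case where $\mathbf{Z}$ carries exactly the information of $\mathbf{pa_Y}$, yielding the sharper $\sqrt{\beta}\log m$ scaling. No further work is needed because the source is drawn i.i.d.

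For (A), I would rewrite $I(Y;\mathbf{Z}) = \mathbb{E}_{(\mathbf{z},y)}\log\frac{p(\mathbf{z},y)}{p(\mathbf{z})p(y)}$, change measure from target to source, and exploit the fact that the log-density-ratio is bounded thanks to $\min_{\mathbf{z}} p(\mathbf{z}) > 0$: a standard bounded-function-under-change-of-measure inequality then caps this gap by $D_{KL}(\mathcal{T}||\mathcal{S})$. For (C), since $\hat{I}_\mathcal{T}$ is naturally realised by reweighting source samples with the density ratio $p_\mathcal{T}/p_\mathcal{S}$, a uniform bound $D = 2/\min_{\mathbf{z}} p(\mathbf{z})$ on that ratio together with the empirical source proxy $I_\mathcal{S} = \mathbb{E}_{\mathcal{S}(\mathbf{z},y)}[\hat{p}(\mathbf{z},y)/(\hat{p}(\mathbf{z})\hat{p}(y))]$ supplies the $D\,I_\mathcal{S}$ contribution. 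Under Assumption~\ref{asp:distribution_shift}, stability of $p(\mathbf{z})$ and $p(y|\mathbf{z})$ forces $p_\mathcal{T}(\mathbf{z},y) = p_\mathcal{S}(\mathbf{z},y)$, so (A) vanishes entirely and the $D_{KL}$ term disappears from the ideal-case bound, leaving only the $D\,I_\mathcal{S}$ residual inherited from (C), which exactly matches the stated form.

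The main obstacle is making (A) rigorous: Pinsker-type bounds demand a bounded log-density-ratio, which is not automatic for a mutual-information functional, so I would carry out a truncation argument anchored at the uniform lower bound $\min_{\mathbf{z}} p(\mathbf{z})$ — this is precisely what gives rise to the constant $D$. A secondary technical point is coordinating the single $1-\delta$ high-probability event from Theorem~\ref{thm:sample_complexity} with the deterministic terms (A) and (C) via a union-bound reallocation of the confidence budget, so that the overall guarantee holds at the stated confidence without loss.
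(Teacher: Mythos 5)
Your triangle-inequality decomposition is a genuinely different route from the paper, but it cannot deliver the bound as stated. In the theorem, the shift-dependent quantities $D_{KL}(\mathcal{T}||\mathcal{S})$ and $D\,I_\mathcal{S}$ sit \emph{inside} the numerator that is divided by $\sqrt{m}$, i.e.\ they are part of the variance factor of a single concentration inequality and decay with the sample size. In your decomposition, terms (A) and (C) are deterministic population/empirical discrepancies that do not shrink as $m$ grows, so the best you could obtain is a bound of the form $D_{KL}(\mathcal{T}||\mathcal{S}) + O(\log m/\sqrt{m}) + D\,I_\mathcal{S}$, which is a structurally different (and weaker) statement than the one you are asked to prove. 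The paper instead applies the same finite-sample inequality from Theorem~\ref{thm:sample_complexity} directly to the target quantities, and then controls the variance term $V(\hat{\mathbf{H}}_\mathcal{T}(Y\mid\mathbf{z}))$ via $\sqrt{V(\hat{\mathbf{H}}(Y\mid\mathbf{z}))}\le\left(1+\tfrac{1}{\sqrt{|\mathcal{Z}|}}\right)\tfrac{1}{\min_{\mathbf{z}}p(\mathbf{z})}\hat{I}(\mathbf{Z};Y)$, followed by a change of measure with Jensen's inequality, $\hat{I}_\mathcal{T}(\mathbf{Z},Y)\le D_{KL}(\mathcal{T}(\mathbf{z},y)\,||\,\mathcal{S}(\mathbf{z},y))+\log\mathbb{E}_{\mathcal{S}(\mathbf{z},y)}\frac{\hat{p}(\mathbf{z},y)}{\hat{p}(\mathbf{z})\hat{p}(y)}$; this is exactly how $D=\frac{2}{\min_{\mathbf{z}}p(\mathbf{z})}$ and $I_\mathcal{S}$ arise and why they inherit the $1/\sqrt{m}$ factor.

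Two further specific problems: first, you attribute $D$ to a uniform bound on the density ratio $p_\mathcal{T}/p_\mathcal{S}$ in term (C), but $\min_{\mathbf{z}}p(\mathbf{z})$ has nothing to do with the cross-domain ratio; in the paper it comes from lower-bounding $p(\mathbf{z})$ when converting a sum over $\mathbf{z}$ of entropy gaps into $\hat{I}(\mathbf{Z};Y)$. Second, your term (A), $|I_\mathcal{T}(Y;\mathbf{Z})-I_\mathcal{S}(Y;\mathbf{Z})|\le D_{KL}(\mathcal{T}||\mathcal{S})$, is not a standard inequality: mutual information is not a bounded functional, and the truncation argument you gesture at would introduce additional cardinality or log-factor terms that do not appear in the target bound. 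The ideal case also needs Assumption~\ref{asp:distribution_shift} to kill the $D_{KL}$ term \emph{and} the replacement of $|\mathcal{Z}|$ by $\beta$ through the independence $\mathbf{z}\perp\bm{\epsilon}$ inside the same concentration bound, which your source-only application of Theorem~\ref{thm:sample_complexity} handles, but the remaining $D\,I_\mathcal{S}$ must come from the variance bound on the target, not from a residual of term (C).
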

\begin{remark}

The theorem shows that in a domain adaptation scenario, causal representation can help to achieve better generalization ability. We bound the risk of mutual information evaluation on the target domain by the bound on the source domain. It is because, in the training process, the information from the target domain is not observable. From the bounds of $|I_\mathcal{T}(Y ; \mathbf{Z})-\hat{I}_\mathcal{T}(Y ; \mathbf{Z})|$ shown in the general case and ideal case, we can see that the generalization error bound of the ideal case is smaller than that of the general case, with a margin quantified by a positive term $D_{KL}(\mathcal{S}||\mathcal{T})>0$. These theoretical results support that the causal representation can achieve better generalization ability under distribution shift.
\end{remark}

\end{remark}

\section{Experiments}
In this section, we conduct extensive experiments to verify the effectiveness of our framework. In the following, we begin with the experiment setup, and then report and analyze the results.

\begin{table*}\label{tab:generalization}
\caption{Overall Results on Yahoo!R3-OOD, Yahoo!R3-i.i.d. and PCIC}
    \center
\small
\renewcommand\arraystretch{1.1}
\setlength{\tabcolsep}{3.3pt}
\begin{threeparttable}  
\scalebox{1}{
    \begin{tabular}{c|c|cc|cc|cc|cc}
    \hline\hline
    Dataset& Method&\multicolumn{4}{c|}{p=$\infty$}&\multicolumn{4}{c}{p=2} \\ \hline
        & Metrics & AUC & ACC & advAUC & advACC & AUC & ACC & advAUC & advACC \\ \hline
        \multirow{7}{*}{Yahoo!R3-OOD}&base(robust) & 0.5 & 0.4508 & 0.5 & 0.4508 & 0.5 & 0.4545 & 0.5 & 0.4537 \\ 
        &base(standard) & 0.6198 & 0.6097 & 0.5212 & 0.5189 & 0.621 & 0.6099 & 0.5139 & 0.5188 \\
        &IB(standard) & 0.6181 & 0.6063 & 0.5333 & 0.5149 & 0.6184 & 0.6069 & 0.5431 & 0.5255 \\
        &r-CVAE(robust) & 0.6186 & 0.6235 & 0.5886 & 0.5912 & 0.6191 & 0.6241 & 0.5882 & 0.5907 \\
        &r-CVAE(standard) & 0.6253 & 0.6249 & 0.5855 & 0.5863 & 0.6233 & 0.6243 & 0.5865 & 0.5872 \\
        &CaRI(robust) & 0.6238 & \textbf{0.6284} & \textbf{0.5993} & \textbf{0.5999} & 0.6242 & \textbf{0.6307} & \textbf{0.6008} & \textbf{0.601} \\ 
        &CaRI(standard) & \textbf{0.629} & 0.6257 & 0.5966 & 0.5965 & \textbf{0.6276} & 0.6255 & 0.5917 & 0.5917 \\  \hline
        \multirow{7}{*}{Yahoo!R3-i.i.d.}&base(robust) & 0.5 & 0.6001 & 0.5 & 0.5997 & 0.5 & 0.6 & 0.5 & 0.6 \\ 
        &base(standard) & 0.7334 & 0.7483 & 0.6267 & 0.6251 & 0.7346 & 0.752 & 0.6260 & 0.6103 \\ 
        &IB(standard) & 0.7291 & 0.7513 & 0.6361 & 0.6721 & 0.7348 & 0.7521 & 0.6418 & 0.6775 \\ 
        &r-CVAE(robust) & 0.7341 & 0.7093 & 0.7180 & 0.7080 & 0.7376 & 0.7151 & 0.7194 & 0.7082 \\ 
        &r-CVAE(standard) & 0.7488 & 0.7515 & 0.7191 & 0.7072 & 0.7487 & 0.7529 & 0.7202 & 0.7099 \\
        &CaRI(robust) & {0.7378} & 0.7168 & \textbf{0.721} & \textbf{0.7107} & {0.7374} & {0.7158} & \textbf{0.7247} & \textbf{0.7159} \\ 
        &CaRI(standard) & \textbf{0.7497} & \textbf{0.7503} & 0.7191 & 0.7099 & \textbf{0.7493} & \textbf{0.7495} & 0.7188 & 0.7072 \\  
        \hline
        \multirow{7}{*}{PCIC}&base(robust) & 0.5534 & 0.5875 & 0.5388 & 0.6257 & 0.5605 & 0.6498 & 0.5264 & 0.6287 \\
        &base(standard) & 0.6177 & 0.6517 & 0.5231 & 0.589 & 0.6269 & 0.6615 & 0.519 & 0.5581 \\ 
        &IB(standard) & 0.6242 & 0.6532 & 0.5741 & 0.6199 & 0.6216 & 0.6537 & 0.5768 & 0.6233 \\ 
        &r-CVAE(robust) & 0.6363 & 0.6733 & 0.6088 & 0.6596 & 0.63 & 0.674 &  0.6187 & 0.6493 \\ 
        &r-CVAE(standard) & 0.6358 & 0.6779 & 0.6138 & 0.6601 & 0.6328 & 0.6725 & 0.5893 & 0.6429  \\ 
        &CaRI(robust) & {0.639} & 0.6761 & \textbf{0.6225} & { 0.6638} & {0.6363} & {0.6709} & \textbf{0.6332} & {0.6576} \\ 
        &CaRI(standard) & \textbf{0.6447} & \textbf{0.6817} & 0.6148 & \textbf{0.664} & \textbf{0.6416} & \textbf{0.6803} & 0.619 & \textbf{0.6625} \\ 
        \hline\hline
    \end{tabular}}   
\end{threeparttable}    
\label{tab:overall_yahoo}
\end{table*}

\subsection{Datasets}
{Our experiments are based on one synthetic and four real-world benchmarks. With the synthetic dataset, we evaluate our method in a controlled manner under the selected dataset. We follow the causal graph defined in Fig.\ref{fig:intro} (a) to build our synthetic simulator, on which we compare the representation learnt by our method with the ground truth under different $\beta$ degrees. 

\subsection{Synthetic Datasets}
The synthetic data is generated following the general causal graph Fig.\ref{fig:intro}. We build the simulator  using nonlinear functions refering to \cite{DBLP:conf/kdd/ZouKCC019, yang2021top}. We simulate 500 data for each settings. Let $\kappa_1(\cdot)$ and $\kappa_2(\cdot)$ as piecewise functions, and $\kappa_1(x) = x - 0.5$ if $x>0$, otherwise $\kappa_1(x) = 0$, $\kappa_2(x) = x$ if $x>0$, otherwise $\kappa_2(x) = 0$ and $\kappa_3(x) = x + 0.5$ if $x<0$, otherwise $\kappa_3(x) = 0$. . For the fair evaluation, we set the same dimension for $\mathbf{pa_Y, nd_Y, dc_Y}$ that $d_1=d_2=d_3=5$. The nonlinear systems are:
\begin{equation}
\begin{split}
    &\mathbf{pa_Y} \sim U(-1, 1), \\
    &\bm{\epsilon_1} = \bm{\epsilon_2} = \bm{\epsilon_3}\sim \mathcal{N}(0.3, \beta I)\\
    &\mathbf{nd}_{1} =  \bm{a}^T\kappa_1(\kappa_2([\mathbf{pa_Y}, \bm{\epsilon_2} ]))+{q},\\
    &\mathbf{nd}_{2} =  \bm{a}^T\kappa_3(\kappa_2([-\mathbf{pa_Y},-\bm{\epsilon_2}]))+{q}, \\
    &\mathbf{nd_Y} = \sigma(\mathbf{nd}_{1} + \mathbf{nd}_{1}\cdot \mathbf{nd}_{2})\\
    &\mathbf{y}_{1} =  \bm{a}^T\kappa_1(\kappa_2([\mathbf{pa_Y}, \bm{\epsilon_1} ]))+{q},
    \\&\mathbf{y}_{2} =  \bm{a}^T\kappa_3(\kappa_2([-\mathbf{pa_Y},-\bm{\epsilon_1}]))+{q}, \\
    &\mathbf{nd_Y} = \mathbb{I}(\sigma(\mathbf{y}_{1} + \mathbf{y}_{1}\cdot \mathbf{y}_{2}))\\
    &\mathbf{dc}_{1} =  \bm{a}^T\kappa_1(\kappa_2([y, \bm{\epsilon_3} ]))+{q},\\
    &\mathbf{dc}_{2} =  \bm{a}^T\kappa_3(\kappa_2([-y,-\bm{\epsilon_3}]))+{q}, \mathbf{dc_Y} = \sigma(\mathbf{dc}_{1} + \mathbf{dc}_{1}\cdot \mathbf{dc}_{2})\\
    &\mathbf{X} = [\mathbf{pa_Y, nd_Y, dc_Y}]\nonumber
\end{split}
\end{equation}
\begin{table*}[h]
    \center
\caption{Overall Results on Coat dataset.}
\renewcommand\arraystretch{1.1}
\setlength{\tabcolsep}{3.3pt}
\begin{threeparttable}  
\scalebox{1}{
    \begin{tabular}{c|c|cc|cc|cc|cc}
    \hline\hline
    Dataset& Method&\multicolumn{4}{c|}{p=$\infty$}&\multicolumn{4}{c}{p=2} \\ \hline
        & Metrics & AUC & ACC & advAUC & advACC & AUC & ACC & advAUC & advACC \\ \hline
        \multirow{7}{*}{Coat-OOD}&base(robust) & 0.5586 & 0.5569 & 0.5479 & 0.5451 & 0.5593 & 0.556 & 0.5441 & 0.5412 \\
        &base(standard) & 0.5659 & 0.5724 & 0.3874 & 0.4024 & 0.5642 & 0.5687 & 0.3128 & 0.3317 \\ 
        &IB(standard) & 0.5659 & 0.5681 & 0.4701 & 0.4796 & 0.5659 & 0.5713 & 0.5442 & 0.5495 \\ 
        &r-CVAE(robust) & 0.5629 & 0.5586 & 0.559 & 0.5544 & 0.5634 & 0.5591 & 0.5572 & 0.5522 \\ 
        &r-CVAE(standard) & 0.5656 & 0.5643 & 0.5527 & 0.5478 & 0.5671 & 0.5649 & 0.5586 & 0.554 \\ 
        &CaRI(robust) & \textbf{0.5707} & {0.5681} & \textbf{0.5653} & \textbf{0.5659} & {0.5705} & {0.5675} & \textbf{0.5674} & \textbf{0.565} \\ 
        &CaRI(standard) & 0.5705 & \textbf{0.5718} & 0.5643 & 0.5659 & \textbf{0.5725} & \textbf{0.5732} & 0.5608 & 0.5601 \\ 
        \hline
        \multirow{7}{*}{Coat-i.i.d.}&base(robust) & 0.7156 & 0.7232 & 0.7034 & 0.7107 & 0.7195 & 0.7261 & 0.7001 & 0.7057 \\ 
        &base(standard) & 0.7191 & 0.7217 & 0.4911 & 0.487 & 0.7235 & 0.7255 & 0.3642 & 0.3515 \\ 
        &IB(standard) & 0.7162 & 0.72 & 0.6023 & 0.6017 & 0.7182 & 0.7222 & 0.694 & 0.696 \\ 
        &r-CVAE(robust) & 0.7147 & 0.7222 & 0.7105 & 0.7181 & 0.7087 & 0.7169 & 0.7058 & 0.7141 \\ 
        &r-CVAE(standard) & 0.7106 & 0.7184 & 0.7029 & 0.7106 & 0.7129 & 0.7206 & 0.7023 & 0.7059 \\ 
        &CaRI(robust) & {0.7276} & 0.7339 & \textbf{0.7208} & \textbf{0.727} & \textbf{0.7265} & \textbf{0.7331} & \textbf{0.7196} & \textbf{0.7261} \\ 
        &CaRI(standard) & \textbf{0.7283} & \textbf{0.7355} & 0.7125 & 0.7196 & 0.7248 & 0.7305 & 0.7069 & 0.7125 \\ 
        
        \hline\hline
    \end{tabular}
}   
\end{threeparttable}    
\label{tab:coat_res}
\end{table*}
where $q=0.3$, $\mathbb{I}(x)$ is an indicator function, which is 1 if $x>0$, and 0 otherwise. From synthetic data, we analyze whether CaRI has the ability to identify the $\mathbf{pa_Y}$ from mixed observational $\mathbf{X}$.

\subsection{Real-word benchmarks}

We also evaluate our method on real-world benchmarks for the recommendation system.

\textbf{Yahoo! R3}\footnote{https://webscope.sandbox.yahoo.com/catalog.php?datatype=r} is an online music recommendation dataset, which contains the user survey data and ratings for randomly selected songs. The dataset contains two parts: the uniform (OOD) set and the nonuniform (i.i.d.) set. The non-uniform (OOD) set contains samples of users deliberately selected and rates the songs by preference, which can be considered as a stochastic logging policy. For the uniform (i.i.d.) set, users were asked to rate 10 songs randomly selected by the system. The dataset contains 14,877 users and 1,000 items. The density degree is 0.812\%, which means that the dataset only records 0.812\% of rating pairs.

\textbf{PCIC} The dataset  is
collected from a survey by questionnaires about the rate and reason why the audience like or dislike the movie. Movie features are collected from movie review pages. The training data is a biased dataset consisting of 1000 users asked to rate the movies they care from 1720 movies. The validation and test set is the user preference on uniformly exposed movies. The density degree is set to be 0.241\%. 

For evaluation, Yahoo! R3 and Coat dataset both have two validation (include test) datasets. The i.i.d. set is 1/3 of data from a nonuniform logging policy, and the OOD set consists of the data generated under a uniform policy. For the PCIC dataset, we train our method on non-uniform datasets and perform evaluations on uniform datasets. 

\textbf{CelebA-anno} The dataset contains more than 200K celebrity images, each with 40 attribute annotations. Following the previous work \cite{kocaoglu2017causalgan}, we select 9 attribute annotations, which include Young, Male, Eyeglasses, Bald, Mustache, Smiling, Wearing Lipstick, and Mouth Open. Our task is to predict Smiling. $\mathbf{pa_Y}$ including \{Young, Male\}, $\mathbf{nd_Y}$ including \{Eyeglasses, Bald, Mustache, Wearing Lipstick\} and $\mathbf{cd_Y}$ including \{Mouth Open\}. From this dataset, we evaluate the ability to distinguish $\mathbf{pa_Y}$ from $\mathbf{X}$ (Results on CelebA-anno are provided in supplementary materials).

\textbf{Coat Shopping Dataset}\footnote{https://www.cs.cornell.edu/~schnabts/mnar/} is a commonly used dataset collected from web-shop ratings on clothing. The self-selected ratings are the i.i.d. set and the uniformly selected ratings are the OOD set. In the training dataset, users were asked to rate 24 coats selected by themselves from 300 item sets. The test dataset collects the user rates on 16 random items from 300 item sets. Just as Yahoo! R3, the training dataset is a non-uniform dataset and the test dataset is a uniform dataset. The dataset provides side information on both users and item sets. The feature dimension of the user/item pair is 14/33.

\textbf{Compared Method}.  For all the compared methods, we use the same model architecture, with different training strategies. The model  consists of representation learning module $\mathbf{z}=\phi(\mathbf{x})$  and the downstream prediction module  $\mathbf{\hat{y}} = g(\mathbf{z})$, with each module implemented by neural networks. \textbf{Base} model has no additional constraints on representation, and the optimization is to minimize the cross-entropy between ${y}$ and learned ${\hat{y}}$. We  involve a recently proposed variational estimation with information bottleneck (\textbf{IB}) \cite{alemi2016deep}, extend the condition VAE (CVAE \cite{sohn2015learning}) by robust training process as \textbf{r-CVAE}, whose objective function is similar with CaRI but without a negative term (Eq.\ref{eq:ib} (2)). We conduct ablation studies by comparing our proposed method \textbf{CaRI} with the r-CVAE to evaluate the effectiveness of negative term.  We evaluate our method on two main aspects: (i) \textbf{Generalization} of the model under distribution shifts and (ii) \textbf{Robustness} under adversarial attack on representation space. 
For (i), we evaluate our method on OOD and i.i.d. setting on Yahoo! R3 and Coat. For (ii), the standard mode of adversarial attack  ($\beta=0$) means that we do not perturb original $\mathbf{z}$. In robust mode, we set $\beta=\{0.1, 0.2, 0.1, 0.3, 0.3\}$ for PCIC, Yahoo! R3, Coat, Synthetic and CelebA-anno respectively.

\textbf{Metrics}. We use the common evaluation metrics AUC/ACC \citep{rendle2012bpr, gunawardana2009survey} on CTR prediction and their variants called adv-ACC/ adv-AUC \citep{madry2017towards} on adversarially perturbed evaluation dataset. {Moreover, we consider Distance Correlation metrics \cite{jones1995fitness} to evaluate the similarity between learned representation and parental information $\mathbf{pa_Y}$. }
\vspace{-3mm}
\subsection{Implementation}
\textbf{Architecture and Setups}: The model consists of two parts, the representation learning part and the downstream prediction part. 
For the representation learning part, we first use encode function $\phi(\cdot)$ to get representation $\mathbf{z}$ and get the intervened $\mathbf{\hat{z}}$. Then we perturb the learned $\mathbf{z}$ and $\mathbf{\bar{z}}$ by PGD attack \cite{madry2017towards} procedure to find the worst case corresponding to the worst downstream loss. We use PGD attack with $\infty$-norm ($p=\infty$) and $2$-norm ($p=2$) in our implementation. Finally we put $\mathbf{z}'$ and $\mathbf{\hat{z}}'$ into the downstream prediction model $g(\cdot)$ to calculate $y$. The likelihood in Eq.\ref{eq:elbo} is estimated by cross-entropy loss.
Note that the perturbation approach would block the gradient propagation between the representation learning process and downstream prediction in some implementation ways. Thus we use the conditional Gaussian prior $p_\theta(\mathbf{z}) = \mathcal{N}(y\mathbf{1}, \mathbf{I})$ rather than standard Gaussian distribution $p_\theta(\mathbf{z}) = \mathcal{N}(\mathbf{0}, \mathbf{I})$ to calculate KL term. If gradient propagation is blocked, by using conditional prior, the learning process of representation $\mathbf{z}$ and exogenous $\bm{\epsilon}$ embedded in $\mathbf{z}'$ will not be influenced. The form of conditional Gaussian prior is more general $p_\theta(\mathbf{z}) = \mathcal{N}(\zeta(y), \mathbf{I})$, where $\zeta(\cdot)$ could be any non-trivial function like linear function even neural network.

\begin{figure*}[htp]
\begin{center}
\centerline{\includegraphics[width=1.8\columnwidth]{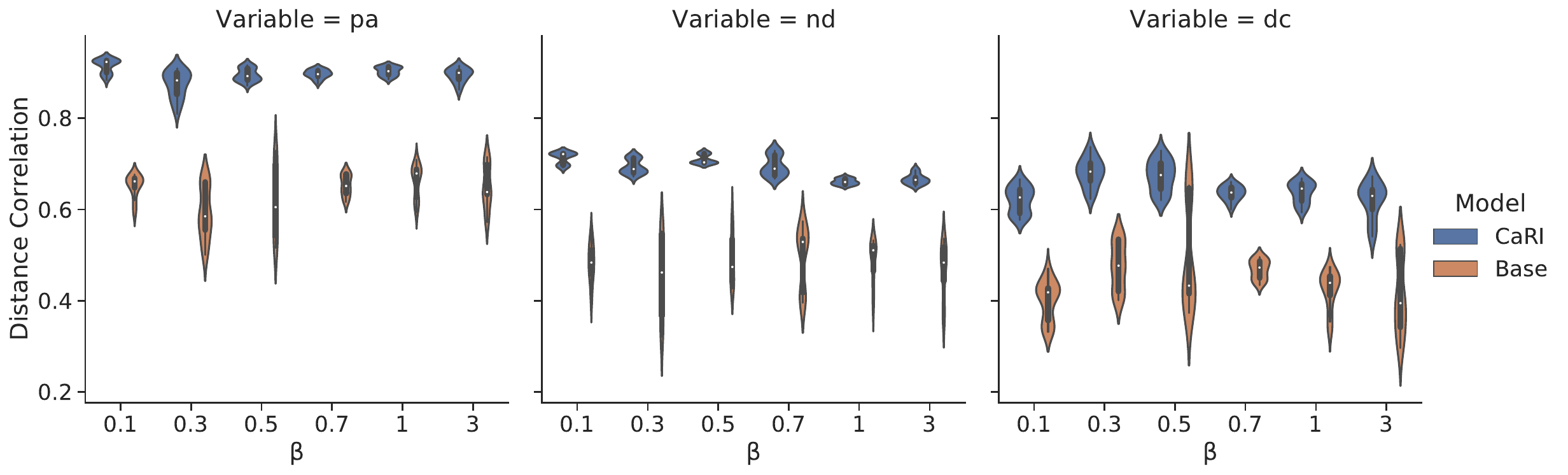}}
\caption{Representation learning results on synthetic dataset over different range of $\beta$, where $p=2$ under robust training.}
\label{fig:identify_result}
\end{center}
\vspace{-3mm}
\end{figure*}
\textbf{Hyperparameter}
The hyper-parameters are determined by grid search.
Specifically, the learning rate and batch size are tuned in the ranges of {{$[10^{-1}, 10^{-2}, 10^{-3}, 10^{-4}]$} and $[64,128,256,512,1024]$}, respectively.
The weighting parameter $\lambda$ is tuned in $[1, 100]$. Perturbation degrees are set to be $\beta=\{0.1, 0.2, 0.1, 0.3\}$ for Coat, Yahoo!R3 and PCIC separately. The representation dimension is empirically set as 64. All the experiments are conducted based on a server with a 16-core CPU, 128g memories and an RTX 5000 GPU. 
\subsection{Overall Effectiveness}
Table \ref{tab:overall_yahoo} shows the overall results on Yahoo! R3 and PCIC. From Yahoo! R3 dataset, which contains both i.i.d. and OOD validation and test sets, we find that our method enjoys better generalization. In Yahoo! R3 OOD, our method increases the performance by 1.9\%,  and 8.1\%, in terms of ACC and adv-ACC, compared with the base method. The performance of r-CVAE is close to CaRI, since it is a modified version of our method, which only includes the positive term in Eq.\ref{eq:l_rb} but removes the negative term. The difference between the performances of CaRI and r-CVAE shows the effectiveness of the negative term in the objective function of CaRI. In PCIC dataset, standard and robust modes of CaRI achieve the best AUC at 64.47\%, and 63.9\% respectively, which validates the effectiveness of our idea. In the robust training mode, our method achieves the best performance in adversarial metrics. In the PCIC dataset, our method reaches 62.25\%, which increases by 8.37\% against the base method on adv-AUC. Robust training of CaRI is also better than the standard training, winning with a margin of around 1.42\%. we present the additional test results and analysis in this section. Table \ref{tab:coat_res} shows overall experimental results on Coat, The table contains both i.i.d. and OOD settings. Based on this we find that in most cases, our method  achieves better performance in terms of AUC and ACC, compared to base methods.  The overall results show that the robust learning process with exogenous variables involved enhances the adversarial performance on perturbed samples. On the other hand, in standard training mode, CaRI achieves better adversarial performance than baselines including base method and IB. We find that standard training of CaRI on PCIC has an AUC of 64.47\%, which is better than the performance under robust training (63.9\%). But contrary conclusions are drawn on adversarial performance. The result supports that the causal representation we learned is more robust. The performance of the base method in robust training mode is worst in most of cases, indicating that the robust training process will largely influence the learning of the model and ruin the prediction model. Although the robust training deteriorates the performance of on normal dataset, it will help to identify the causal representation, which benefits downstream prediction under adversarial attack. The robust learning process with exogenous variables involved enhances the adversarial performance on perturbed samples. On the other hand, in standard training mode, CaRI achieves better adversarial performance than all baselines. The result supports that the causal representation our method learned is more robust. 

\vspace{-2mm}
\subsection{Representation Analysis}
{In this section, we study whether our method CaRI helps to identify the parental information from observational data. Fig.\ref{fig:identify_result} demonstrates the ability of the model to learn causal representations under different $\beta$ degrees on a synthetic dataset. The figure shows the distance correlation between the learned representation $\mathbf{z}$ and different parts of observational data, namely ($\mathbf{pa_Y, nd_Y, dc_Y}$). From Fig.\ref{fig:identify_result} (left), we find that our method learns a representation that is with the highest similarity, in comparison with the base method under different values of $\beta$. It is evidence that our method successfully identifies the parental information from mixed observational data. The information from $\mathbf{nd_Y}$ and $\mathbf{dc_Y}$ are not considered as important as parental information from CaRI, and the  distance correlation metric corresponding to this part is slightly lower. We also find that the metric under CaRI gets lower variance, which shows the stable performance of CaRI. On the contrary, the distance correlation metric of the base method is with high variance, which indicates the possible incapability of the base method on extracting the parental information from observations.}

\vspace{-2mm}
\section{Conclusions}
In this paper, we deal with the problem of learning causal representations from observational data, which comes with satisfactory generalization ability. Assuming that the underlying latent factors follow some causal generative models, we argue that learning a minimum sufficient cause of the system is the optimal solution. By analyzing the information theoretical property of our hypothetical graphical model, we propose a causality-inspired representation learning method by optimizing a function with regularized mutual information constraints. It achieves effective learning with guaranteed sample complexity reduction under certain assumptions. Extensive experiments on real-world dataset show the effectiveness of our algorithm, verifying our claim of robustness of the representation with respect to downstream tasks.

\vspace{-2mm}
\section{Acknowledgements}
The work was supported by National Key R\&D Program of China (2022YFB4501500, 2022YFB4501504). We thank Jianhong Wang for the fruitful discussion on some of the results and for proofreading this paper and thank Xu Chen  for proofreading. We thank the anonymous reviewers for their feedback. 
\newpage

\bibliography{acmart}
\bibliographystyle{ACM-Reference-Format}

\onecolumn
\onecolumn
\subsection{Proof of Theorem \ref{thm:sufficient_statistics}}
The proof directly follows the following two lemmas. We denote  the set of probabilistic functions of $\mathbf{X}$ into an arbitrary target space as $\mathcal{F}(\mathbf{X})$, and as $\mathcal{S}({Y})$ the set of sufficient statistics for ${Y}$. 
Since $\mathbf{h}(\cdot)$ is a function of combination , we have $I(Y; \mathbf{pa_Y, nd_Y, dc_Y}) = I(Y; \mathbf{X})$
\begin{lemma}
Let $\mathbf{Z}$ be a probabilistic function of $\mathbf{X}$ and $\mathbf{X} = \mathbf{h(pa_Y, nd_Y, dc_Y)}$, where $\mathbf{h}$ is function of combination. Then $\mathbf{Z}$ is a sufficient causes for ${Y}$ if and only if $$I(Y; \mathbf{Z}) = \max_{\mathbf{Z'}\in \mathcal{F}(\mathbf{X})} I(Y, \mathbf{\mathbf{Z'}})$$
\end{lemma}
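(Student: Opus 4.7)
The plan is to characterize the maximum $\max_{\mathbf{Z'}\in\mathcal{F}(\mathbf{X})} I(Y;\mathbf{Z'})$ via the data processing inequality, and then translate the attainment condition into the conditional independence that appears in Definition~\ref{def:sufficient}. Concretely, the argument splits into (i) identifying the value of the maximum, (ii) applying the standard tightness condition of the DPI, and (iii) matching the resulting identity against the sufficient-cause definition.

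First, I would pin down the value of the maximum. Every $\mathbf{Z}'\in\mathcal{F}(\mathbf{X})$ is by construction a probabilistic function of $\mathbf{X}$, so $(Y,\mathbf{pa_Y},\mathbf{nd_Y},\mathbf{dc_Y}) \to \mathbf{X} \to \mathbf{Z}'$ is a Markov chain and the DPI gives $I(Y;\mathbf{Z}')\le I(Y;\mathbf{X})$. The identity map $\mathbf{Z}'=\mathbf{X}$ itself lies in $\mathcal{F}(\mathbf{X})$ and attains the bound, so $\max_{\mathbf{Z}'\in\mathcal{F}(\mathbf{X})} I(Y;\mathbf{Z}') = I(Y;\mathbf{X})$. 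Because $\mathbf{h}$ is a function of combination (hence invertible), $I(Y;\mathbf{X}) = I(Y;\mathbf{pa_Y},\mathbf{nd_Y},\mathbf{dc_Y})$. The lemma therefore reduces to showing that $\mathbf{Z}$ is a sufficient cause of $Y$ if and only if $I(Y;\mathbf{Z}) = I(Y;\mathbf{X})$.

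Second, I would invoke the standard equality condition for the DPI: for the Markov chain $Y \to \mathbf{X} \to \mathbf{Z}$, $I(Y;\mathbf{Z}) = I(Y;\mathbf{X})$ holds iff $Y \perp \mathbf{X}\mid \mathbf{Z}$, i.e.\ $p(y\mid \mathbf{x},\mathbf{z}) = p(y\mid \mathbf{z})$ almost surely. Invertibility of $\mathbf{h}$ lets me replace $\mathbf{X}$ by $(\mathbf{pa_Y},\mathbf{nd_Y},\mathbf{dc_Y})$ in the conditioning, so the equality becomes $p(y\mid \mathbf{z},\mathbf{pa_y},\mathbf{nd_y},\mathbf{dc_y}) = p(y\mid \mathbf{z})$.

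Finally, I would reconcile this triple-conditioning identity with the pair-conditioning identity in Definition~\ref{def:sufficient}. Marginalizing $\mathbf{dc_Y}$ out is the easy direction. The reverse direction uses the causal structure of Fig.~\ref{fig:intro}(b): $\mathbf{dc_Y}$ is a pure descendant of $Y$, giving the d-separation $\mathbf{dc_Y} \perp (\mathbf{pa_Y},\mathbf{nd_Y},\mathbf{Z}) \mid Y$, which via Bayes' rule lets me factor $p(\mathbf{dc_y}\mid y)$ out of $p(y\mid \mathbf{z},\mathbf{pa_y},\mathbf{nd_y},\mathbf{dc_y})$ and absorb it into the normalizing constant. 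I expect this last step to be the main obstacle, since it is the only place where the causal graphical structure (rather than information-theoretic inequalities alone) is invoked; care is needed to ensure the denominator simplifies consistently so that the dependence on $\mathbf{dc_y}$ genuinely cancels and one recovers $p(y\mid \mathbf{z},\mathbf{pa_y},\mathbf{nd_y}) = p(y\mid \mathbf{z})$ as required.
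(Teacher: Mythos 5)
Your steps (i) and (ii) coincide with the paper's own argument: the maximum over $\mathcal{F}(\mathbf{X})$ is $I(Y;\mathbf{X})$ by the data processing inequality, attaining it is equivalent to $I(Y;\mathbf{X}\mid\mathbf{Z})=0$, i.e.\ $Y\perp\mathbf{X}\mid\mathbf{Z}$, and the marginalization direction (triple conditioning implies pair conditioning) is exactly how the paper obtains the ``necessary'' half. The genuine gap is in the reverse implication of your step (iii), and it is not a matter of bookkeeping in the denominator: the cancellation you hope for does not occur. Carrying out your plan with $\mathbf{dc_Y}\perp(\mathbf{pa_Y},\mathbf{nd_Y},\mathbf{Z})\mid Y$ and $p(y\mid\mathbf{z},\mathbf{pa_y},\mathbf{nd_y})=p(y\mid\mathbf{z})$ gives $p(y\mid\mathbf{z},\mathbf{pa_y},\mathbf{nd_y},\mathbf{dc_y})=p(\mathbf{dc_y}\mid y)\,p(y\mid\mathbf{z})/p(\mathbf{dc_y}\mid\mathbf{z})=p(y\mid\mathbf{z},\mathbf{dc_y})$: the dependence on $(\mathbf{pa_y},\mathbf{nd_y})$ cancels, but the dependence on $\mathbf{dc_y}$ survives. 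Because $\mathbf{dc_Y}$ is a descendant of $Y$, it generically carries information about $Y$ (about the exogenous noise in $Y$'s own mechanism) that no cause-side $\mathbf{Z}$ can screen off; for instance with $\mathbf{Z}=\mathbf{pa_Y}$, $Y=\mathbf{pa_Y}\oplus\epsilon$ and $\mathbf{dc_Y}=Y$, the variable $\mathbf{Z}$ satisfies Definition~\ref{def:sufficient} yet $I(Y;\mathbf{X})=H(Y)>I(Y;\mathbf{Z})$. So Definition~\ref{def:sufficient} together with the graph of Fig.~\ref{fig:intro}(b) does not yield $Y\perp\mathbf{X}\mid\mathbf{Z}$, and the forward half of the lemma cannot be recovered by your route. (A smaller additional problem: the d-separation $\mathbf{dc_Y}\perp(\mathbf{pa_Y},\mathbf{nd_Y},\mathbf{Z})\mid Y$ you invoke requires $\mathbf{dc_Y}$ to receive no arrow from $\mathbf{pa_Y}$, whereas the paper's own factorization in the next lemma treats $\mathbf{dc_Y}$ as depending on both $Y$ and $\mathbf{pa_Y}$.)

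It is worth seeing how the paper handles this point: it does not derive $Y\perp\mathbf{X}\mid\mathbf{Z}$ at all, but simply asserts the Markov chain $Y-\mathbf{Z}-\mathbf{X}$ in the ``sufficient'' half, which amounts to implicitly strengthening sufficiency so that $\mathbf{Z}$ also screens off $\mathbf{dc_Y}$; the same extra condition resurfaces in the proof of Proposition~\ref{thm:sufficient_statistics_mix} as the assumption that ``$\mathbf{Z}$ blocks the path between $\mathbf{X}$ and $\mathbf{dc_Y}$.'' Your instinct that this is the main obstacle is therefore correct, but closing it requires an assumption of that kind (or restricting the comparison so that the benchmark is $I(Y;\mathbf{pa_Y},\mathbf{nd_Y})$ rather than $I(Y;\mathbf{X})$); it cannot be obtained from the stated definition by Bayes' rule alone.
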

\begin{proof}
Lemma is an extension of  Lemma 12 in \cite{shamir2010learning}. The differences lie on that $\mathbf{X}$ is consist of $\mathbf{pa_Y, nd_Y, dc_Y}$ and we focus on the sufficient causes defined in Definition \ref{def:sufficient}. Firstly for the sufficient condition, for every $\mathbf{Z'}$ which is a probabilistic function of $\mathbf{X}$, we have Markov Chain  $Y - \mathbf{X} - \mathbf{Z}'$, so from data processing inequality \cite{cover1999elements} we have $I(Y; \mathbf{X}) \ge I(Y; \mathbf{Z'})$. Therefore we have 
$I(Y; \mathbf{Z}) = \max_{\mathbf{Z'}\in \mathcal{F}(\mathbf{X})} I(Y, \mathbf{\mathbf{Z'}})$. We also have Markov Chain  $Y - \mathbf{Z} - \mathbf{X} $, for the data processing inequality we have $I(Y; \mathbf{X}) \le I(Y; \mathbf{Z})$. Thus $I(Y; \mathbf{Z}) = \max_{\mathbf{Z'}\in \mathcal{F}(\mathbf{X})} I(Y, \mathbf{\mathbf{Z'}})$ is held. 

Then for necessary condition, assume that we have a  Markov Chain  $Y- \mathbf{X} - \mathbf{Z}$.  According to data processing inequality, the $I(Y; \mathbf{Z}) = I(Y; \mathbf{X})$ holds if and only if $I(Y; \mathbf{X}| \mathbf{Z}) = 0$. Since $\mathbf{X}$ is consist of $\mathbf{pa_Y, nd_Y, dc_Y}$, the $I(Y; \mathbf{pa_Y, nd_Y}| \mathbf{Z}) = 0$. in other word, $Y$ and $\mathbf{pa_Y, nd_Y}$ are conditionally independent by $\mathbf{Z}$, hence $\mathbf{Z}$ is a sufficient causes satisfied Definition \ref{def:sufficient}.

\end{proof}

\begin{lemma} 
Let $\mathbf{Z}$ be sufficient statistics of  ${Y}$ and $\mathbf{X} = \mathbf{h(pa_Y, nd_Y, dc_Y)}$, where $\mathbf{h}$ is function of combination. Then $\mathbf{Z}$ is minimal sufficient causes for $\mathbf{Y}$ if and only if 
\begin{equation}\label{eq:eq_in_lemma3}
    I(\mathbf{nd_Y, pa_Y; Z}) = \min_{\mathbf{Z'}\in \mathcal{S}(Y)} I(\mathbf{nd_Y, pa_Y; Z'})
\end{equation}

\end{lemma}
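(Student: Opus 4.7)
The plan is to mirror the classical proof that a sufficient statistic is minimal iff it has the smallest mutual information with the underlying data (Shamir et al., Lemma 13), adapted to the present setting where the ``data'' is $(\mathbf{pa_Y, nd_Y})$ and sufficiency means blocking the path from $(\mathbf{pa_Y, nd_Y})$ to $Y$ per Definition \ref{def:sufficient}. I will handle the two implications separately.

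For the ``if'' direction, assume $\mathbf{Z}$ is minimal sufficient. By Definition \ref{def:minimal_sufficient}, for any sufficient $\mathbf{Z}' \in \mathcal{S}(Y)$ there is a measurable $f$ with $\mathbf{Z} = f(\mathbf{Z}')$ a.e.\ w.r.t.\ $\mathbf{X}$. This yields the Markov chain $(\mathbf{pa_Y, nd_Y}) \to \mathbf{Z}' \to \mathbf{Z}$, so the data processing inequality gives $I(\mathbf{pa_Y, nd_Y; Z}) \leq I(\mathbf{pa_Y, nd_Y; Z}')$. Taking infimum over $\mathbf{Z}' \in \mathcal{S}(Y)$ establishes Eq.~\eqref{eq:eq_in_lemma3}.

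For the ``only if'' direction, suppose $\mathbf{Z}$ attains the minimum. Fix an arbitrary sufficient $\mathbf{Z}'$; I want to produce a deterministic $f$ with $\mathbf{Z} = f(\mathbf{Z}')$. The natural move is to consider the combined statistic $(\mathbf{Z}, \mathbf{Z}')$, which is sufficient because $\mathbf{Z}$ already is. Applying the chain rule,
\begin{equation*}
I(\mathbf{pa_Y, nd_Y; Z, Z'}) = I(\mathbf{pa_Y, nd_Y; Z'}) + I(\mathbf{pa_Y, nd_Y; Z \mid Z'}),
\end{equation*}
and combining with $I(\mathbf{pa_Y, nd_Y; Z, Z'}) \geq I(\mathbf{pa_Y, nd_Y; Z})$ (monotonicity) together with $I(\mathbf{pa_Y, nd_Y; Z}) \leq I(\mathbf{pa_Y, nd_Y; Z'})$ (by the minimality hypothesis), the goal is to squeeze out $I(\mathbf{pa_Y, nd_Y; Z \mid Z'}) = 0$, i.e.\ $\mathbf{Z}$ carries no residual information about $(\mathbf{pa_Y, nd_Y})$ once $\mathbf{Z}'$ is known. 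From there, combined with the Markov structure $(\mathbf{pa_Y, nd_Y}) \to \mathbf{X} \to \mathbf{Z}$ and the invertibility of $\mathbf{h}$, one concludes $\mathbf{Z}$ is a.e.\ a function of $\mathbf{Z}'$.

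The main obstacle will be this last step. Vanishing of $I(\mathbf{pa_Y, nd_Y; Z \mid Z'})$ does not, by itself, imply $H(\mathbf{Z} \mid \mathbf{Z}') = 0$: in principle $\mathbf{Z}$ could carry extra randomness that is independent of $(\mathbf{pa_Y, nd_Y})$ but not a function of $\mathbf{Z}'$. To close the gap I would invoke the existence of the canonical minimal sufficient statistic $\mathbf{Z}_0$ (essentially the posterior $\mathbf{x} \mapsto p(\cdot \mid \mathbf{x})$), which by construction is a function of every sufficient $\mathbf{Z}'$. Applying the forward direction to $\mathbf{Z}_0$, both $\mathbf{Z}$ and $\mathbf{Z}_0$ are MI-minimizers, and the equality condition in the data processing inequality for the chain $(\mathbf{pa_Y, nd_Y}) \to \mathbf{Z} \to \mathbf{Z}_0$ (using that $\mathbf{h}$ is invertible, so that $\sigma(\mathbf{X})$ and $\sigma(\mathbf{pa_Y, nd_Y, dc_Y})$ coincide up to null sets) pins down $\mathbf{Z}$ within the equivalence class of $\mathbf{Z}_0$, making it a function of any other sufficient $\mathbf{Z}'$ through $\mathbf{Z}_0$. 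I expect this identification-up-to-equivalence step to be the technically delicate part of the argument, and it is also where an implicit convention on whether $\phi$ is deterministic (or is quotiented by $(\mathbf{pa_Y, nd_Y})$-irrelevant noise) will need to be fixed.
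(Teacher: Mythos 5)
Your forward direction (minimal $\Rightarrow$ minimizer) is exactly the paper's argument: Definition \ref{def:minimal_sufficient} supplies $\mathbf{Z}=f(\mathbf{Z}')$ for any sufficient $\mathbf{Z}'$, hence the Markov chain $(\mathbf{pa_Y},\mathbf{nd_Y})\to\mathbf{Z}'\to\mathbf{Z}$ and the data processing inequality. The gap is in the converse. First, the squeeze you propose does not close: by the chain rule, $I(\mathbf{pa_Y,nd_Y;Z\mid Z'})=I(\mathbf{pa_Y,nd_Y;Z,Z'})-I(\mathbf{pa_Y,nd_Y;Z'})$, and the two facts you cite, $I(\mathbf{pa_Y,nd_Y;Z,Z'})\ge I(\mathbf{pa_Y,nd_Y;Z})$ and $I(\mathbf{pa_Y,nd_Y;Z})\le I(\mathbf{pa_Y,nd_Y;Z'})$, only bound this conditional information from below by a nonpositive quantity, which is vacuous. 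To force it to vanish you would need the upper bound $I(\mathbf{pa_Y,nd_Y;Z,Z'})\le I(\mathbf{pa_Y,nd_Y;Z'})$, and the minimizing property of $\mathbf{Z}$ points the wrong way for that: it only lower-bounds the information of other sufficient statistics such as the enlargement $(\mathbf{Z},\mathbf{Z}')$. Second, your repair via the canonical statistic $\mathbf{Z}_0$ is a sketch precisely at the point where the work lies: as your own remark about extra randomness shows, equality of mutual informations does not pin $\mathbf{Z}$ down as a function of $\mathbf{Z}_0$ (take $\mathbf{Z}=(\mathbf{Z}_0,N)$ with $N$ independent noise; it is still sufficient and attains the same mutual information), so ``the equality condition in the data processing inequality pins down $\mathbf{Z}$'' is the unproved crux, not a routine verification.

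The paper proves this direction differently and constructively, by contraposition: assuming $\mathbf{Z}$ is not minimal (so some sufficient $\mathbf{Z}'$ exists of which $\mathbf{Z}$ is not a function), it factorizes $p(\mathbf{pa_y,nd_y}\mid y)$ in Fisher--Neyman form, defines an equivalence relation on values of $\mathbf{Z}$ by requiring the ratio of the $y$-dependent factors to be constant in $y$, and builds a representative-selecting statistic $\mathbf{V}$ with $\mathbf{V(x)}\in\{\mathbf{z}\mid \mathbf{z}\sim \mathbf{Z(x)}\}$; it then verifies that $\mathbf{V}$ is itself sufficient and establishes $I(\mathbf{nd_Y,pa_Y;Z})\ge I(\mathbf{nd_Y,pa_Y;V})+I(\mathbf{nd_Y,pa_Y;Z\mid Z'})$, with the last term argued to be strictly positive because $\mathbf{Z}$ is not a function of $\mathbf{Z}'$, contradicting \eqref{eq:eq_in_lemma3}. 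In other words, the strictness/identification step you defer is exactly what the paper's explicit quotient construction is there to deliver; to complete your outline you would need to supply that construction (or fix a deterministic/quotient convention on $\phi$ and carry out a genuine equality-case analysis of data processing), so as written the converse direction is missing its key argument.
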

\begin{proof}
Firstly, for the sufficient condition, let $\mathbf{Z}$ be a minimal sufficient causes, and  $\mathbf{Z}'$ be some sufficient causes. Because there is a function $\mathbf{Z} = f(\mathbf{Z'})$ from Definition \ref{def:minimal_sufficient}, it has Markov Chain $(\mathbf{nd_Y, pa_Y}) - {Y} - \mathbf{Z'} - \mathbf{Z}$, and we get $I(\mathbf{nd_Y, pa_Y; Z}) \le I(\mathbf{nd_Y, pa_Y;Z'})$. So that $I(\mathbf{nd_Y, pa_Y; Z}) = \min_{\mathbf{Z'}\in \mathcal{S}(Y)} I(\mathbf{nd_Y, pa_Y; Z'})$ holds.

For the necessary condition, assume that $\mathbf{Z}$ is not minimal, then there exist another sufficient statistics $\mathbf{V}$ allows $I(\mathbf{nd_Y, pa_Y;Z})> I(\mathbf{nd_Y, pa_Y;V})$ and let $\mathbf{V}: \mathcal{X}\rightarrow\mathcal{Z}$ is a function of $\mathbf{X}$ such that $\forall \mathbf{x}, \quad \mathbf{V(x)} \in\{\mathbf{z \mid z \sim Z(x)\}}$. Inspired by Fisher–Neyman factorization theorem \cite{fisher1922mathematical}, we can factorize $p(\mathbf{x})$ as below
\begin{equation}
    \forall \mathbf{x}, y \quad p(\mathbf{x} \mid y)=l_{\mathbf{Z}}^3(\mathbf{dc_y|pa_y})l_{\mathbf{Z}}^1(\mathbf{pa_y, nd_y}) l_{\mathbf{Z}}^2(\mathbf{Z(x)}, y)
\end{equation}
In above equation since $\mathbf{dc_Y}$ is decided by $\mathbf{pa_Y}$ and $Y$. We can drop the $\mathbf{dc_y}$ by defining $l_{\mathbf{Z}}^3(\mathbf{dc_y|pa_y}) \triangleq l_{\mathbf{V}}^3(\mathbf{dc_y|pa_y})$ and we can rewrite the sufficient causes condition as below
\begin{equation}
    \forall \mathbf{pa_y, nd_y}, y \quad p(\mathbf{pa_y, nd_y} \mid y)=l_{\mathbf{Z}}^1(\mathbf{pa_y, nd_y}) l_{\mathbf{Z}}^2(\mathbf{Z(x)}, y)
\end{equation}
 We define a equivalence relation $\sim$ by 
\begin{equation}
    \mathbf{z_{1}} \sim \mathbf{z_{2}} \Longleftrightarrow \frac{l_{\mathbf{Z}}^2\left(\mathbf{z_{1}}, y\right)}{l_{\mathbf{Z}}^2\left(\mathbf{z_{2}}, y\right)} \text{is a constant function of } Y
\end{equation}
There exists a sufficient cause $\mathbf{Z}^{\prime}$ such that $\mathbf{Z}$ is not a function of $\mathbf{Z}^{\prime}$. The following process proof that $\mathbf{V}$ is also sufficient cause of $Y$:

$$
\begin{aligned}
&l_{\mathbf{V}}^1(\mathbf{pa_y, nd_y})  \triangleq l_{\mathbf{Z}}^1(\mathbf{pa_y, nd_y}) \frac{l_{\mathbf{Z}}^2\left(\mathbf{Z(x)}, y\right)}{l_{\mathbf{Z}}^2\left(\mathbf{V(x)}, y\right)} \\
&l_{\mathbf{Z}}^2\left(\mathbf{V(x)}, y\right) \triangleq l_{\mathbf{V}}^2\left(\mathbf{V(x)}, y\right)
\end{aligned}
$$
Then
$$
\begin{aligned}
p(\mathbf{pa_y, nd_y} \mid y) &=l_{\mathbf{Z}}^1(\mathbf{pa_y, nd_y})l_{\mathbf{Z}}^2\left(\mathbf{Z(x)}, y\right)  \\
&=l_{\mathbf{Z}}^1(\mathbf{pa_y, nd_y}) \frac{l_{\mathbf{Z}}^2\left(\mathbf{Z(x)}, y\right)}{l_{\mathbf{Z}}^2\left(\mathbf{V(x)}, y\right)} l_{\mathbf{Z}}^2\left(\mathbf{V(x)}, y\right) \\
&=l_{\mathbf{V}}^1(\mathbf{pa_y, nd_y}) l_{\mathbf{Z}}^2\left(\mathbf{V(x)}, y\right)
\end{aligned}
$$
Since above equation holds, $\mathbf{V}$ have factorization formulation of sufficient statistics, $\mathbf{V}$ is also a sufficient statistic.
Let $\mathbf{x_{1}, x_{2}}$ such that $\mathbf{Z}^{\prime}\left(\mathbf{x_{1}}\right)=\mathbf{Z}^{\prime}\left(\mathbf{x_{2}}\right)$, then $l_{\mathbf{Z'}}^2\left(\mathbf{Z}^{\prime}\left(\mathbf{x_{1}}\right), y\right) = l_{\mathbf{Z'}}^2\left(\mathbf{Z}^{\prime}\left(\mathbf{x_{2}}\right), y\right)$
$$
\begin{aligned}
\frac{l_{\mathbf{Z}}^2\left(\mathbf{V(x_1)}, y\right)}{l_{\mathbf{Z}}^2\left(\mathbf{V(x_2)}, y\right))} 
&=\frac{p\left(\mathbf{x_{1}} \mid y\right) l_{\mathbf{Z}}^1\left(\mathbf{x_{2}}\right)}{p\left(\mathbf{x_{2}} \mid y\right) l_{\mathbf{Z}}^1\left(\mathbf{x_{1}}\right)} \\
&=\frac{l_{\mathbf{Z'}}^1\left(\mathbf{x_{1}}\right) l_{\mathbf{Z'}}^2\left(\mathbf{Z}^{\prime}\left(\mathbf{x_{1}}\right), y\right) l_{\mathbf{Z}}^1\left(\mathbf{x_{2}}\right)}{l_{\mathbf{Z}}^1\left(\mathbf{x_{1}}\right) l_{\mathbf{Z'}}^2\left(\mathbf{Z}^{\prime}\left(\mathbf{x_{2}}\right), y\right)l_{\mathbf{Z'}}^1\left(\mathbf{x_{2}}\right)} \\
&=\frac{l_{\mathbf{Z'}}^1\left(\mathbf{x_{1}}\right) l_{\mathbf{Z}}^1\left(\mathbf{x_{2}}\right)}{l_{\mathbf{Z}}^1\left(\mathbf{x_{1}}\right) l_{\mathbf{Z'}}^1\left(\mathbf{x_{2}}\right)}
\end{aligned}
$$
From the above equation we can get $\mathbf{Z(x_1)} =\mathbf{Z(x_2)}$, then we have $\mathbf{V(x_1)} =\mathbf{V(x_2)}$ because $\mathbf{V}$ is function of $\mathbf{Z'}$. Since $\mathbf{Z}$ is sufficient cause of $Y$, and $\mathbf{pa_Y, nd_Y}\perp Y| \mathbf{Z}$ in Definition \ref{def:condition_independent} holds. There exists Markov Chains $\mathbf{X-Z-V}$ and $\mathbf{(pa_Y, nd_Y)-Z-V}$. From data processing inequality, $I(\mathbf{nd_Y, pa_Y;Z})\ge I(\mathbf{nd_Y, pa_Y;V})$. 
The term  $I(\mathbf{nd_Y, pa_Y;Z})$ can be decomposed as below
\begin{equation}
\begin{split}
    I(\mathbf{nd_Y, pa_Y;Z}) =& I(\mathbf{nd_Y, pa_Y;V}) + I(\mathbf{nd_Y, pa_Y;Z | V}) \\
    \ge& I(\mathbf{nd_Y, pa_Y;V}) + I(\mathbf{nd_Y, pa_Y;Z | Z', V})\\
    =&I(\mathbf{nd_Y, pa_Y;V}) + I(\mathbf{nd_Y, pa_Y;Z | Z'})
\end{split}
\end{equation}
Since $\mathbf{Z'}$ is not the function of $\mathbf{Z}$, thus $I(\mathbf{nd_Y, pa_Y;Z | Z'})>0$, therefore we have $I(\mathbf{nd_Y, pa_Y;Z})> I(\mathbf{nd_Y, pa_Y;V})$. Thus Eq. \ref{eq:eq_in_lemma3} does not hold if $\mathbf{Z}$ is not minimal. The proof  completes.

\end{proof}
\subsection{Proof of Proposition \ref{thm:sufficient_statistics_mix}}
\begin{proof}
Under the assumption that $\mathbf{Z}$ block the path between $\mathbf{X}$ and $\mathbf{dc_Y}$, $\mathbf{X}$ and $\mathbf{dc_Y}$ are conditional independent by variable $\mathbf{Z}$. $\mathbf{X} = h(\mathbf{pa_Y, nd_Y, dc_Y}) = h(\mathbf{pa_Y, nd_Y, Z}) = h(\mathbf{pa_Y, nd_Y})$. Since all generative function of factors are invertible, we can replace $(\mathbf{nd_Y, dc_Y})$ in Markov Chain shown in the proof of Theorem \ref{thm:sufficient_statistics} by variable $\mathbf{X}$. Therefore, $p(y|\mathbf{z, pa_y, nd_y})=p(y|\mathbf{z})$ is held if and only if $p(y|\mathbf{z, x})=p(y|\mathbf{z})$ holds. Thus, under the the assumption that $\mathbf{Z}$ block the path between $\mathbf{X}$ and $\mathbf{dc_Y}$ and $h$ is a linear invertible function, the optimization processes defined in Proposition  \ref{thm:sufficient_statistics_mix} and Theorem \ref{thm:sufficient_statistics} are equivalence.
\end{proof}
\subsection{Proof of Theorem \ref{thm:sample_complexity}}
The proof follows  \cite{shamir2010learning} Theorem 3. The sketch of proof contains two steps: (i) we decompose the original objective $|I(Y;\mathbf{Z})- \hat{I}(Y;\mathbf{Z})|$ into two parts. (ii) for each part, we deduce the deterministic finite sample bound by concentration
of measure arguments on L2 norms of random vector. Let $H(X)$ denote the entropy of $X$, we have
\begin{equation}
\begin{split}
    |I(Y;\mathbf{Z})- \hat{I}(Y;\mathbf{Z})|\le|H(Y|\mathbf{Z})-\hat{H}(Y|\mathbf{Z})|+|H(Y)-\hat{H}(Y)|
\end{split}
\end{equation}
Let $\zeta(x)$ denote a continuous, monotonically increasing and concave function.
\begin{equation}
\begin{split}
    \zeta({x})= \begin{cases}0 & x=0 \\ {x} \log (1 / {x}) & 0<{x} \leq 1 / \mathrm{e} \\ 1 / \mathrm{e} & x>1 / \mathrm{e}\end{cases}
\end{split}
\end{equation}
for the term $|H(Y|Z)-\hat{H}(Y|Z)|$
\begin{equation}\label{eq:separate}
\begin{split}
    |H(Y|\mathbf{Z})-\hat{H}(Y|\mathbf{Z})|&=\left|\sum_{\mathbf{z}}(p(\mathbf{z}) H(Y \mid \mathbf{z})-\hat{p}(\mathbf{z}) \hat{H}(Y \mid \mathbf{z}))\right| \\
& \leq\left|\sum_{\mathbf{z}} p(\mathbf{z})(H(Y \mid \mathbf{z})-\hat{H}(Y \mid \mathbf{z}))\right|+\left|\sum_{\mathbf{z}}(p(\mathbf{z})-\hat{p}(\mathbf{z})) \hat{H}(Y \mid \mathbf{z})\right|
\end{split}
\end{equation}
For the  first summand in this bound, we introduce variable $\bm{\epsilon}$ to help decompose $p(y|\mathbf{z})$, where $\bm{\epsilon}$ is independent with the parents $\mathbf{pa_y}$ (i.e. $\bm{\epsilon}\perp\mathbf{pa_y}$)
\begin{equation}
\begin{split}
    &\left|\sum_{\mathbf{z}} p(\mathbf{z})(H(Y \mid \mathbf{z})-\hat{H}(Y \mid \mathbf{z}))\right|\\
    & \leq\left|\sum_{\mathbf{z}} p(\mathbf{z}) \sum_{{y}}(\hat{p}({y} \mid \mathbf{z}) \log (\hat{p}({y} \mid \mathbf{z}))-p(y \mid \mathbf{z}) \log (p(y \mid \mathbf{z})))\right| \\
& \leq \sum_{\mathbf{z}} p(\mathbf{z}) \sum_{y} \zeta(|\hat{p}(y \mid \mathbf{z})-p(y \mid \mathbf{z})|) \\
&=\sum_{\mathbf{z}} p(\mathbf{z}) \sum_{y} \zeta\left(\left|\sum_{\bm{\epsilon}} p(\bm{\epsilon} \mid \mathbf{z})(\hat{p}(y \mid \mathbf{z}, \bm{\epsilon})-p(y \mid \mathbf{z}, \bm{\epsilon}))\right|\right) \\
&=\sum_{\mathbf{z}} p(\mathbf{z}) \sum_{y} \zeta(\|\hat{\mathbf{p}}(y \mid \mathbf{z}, \bm{\epsilon})-\mathbf{p}(y \mid \mathbf{z}, \bm{\epsilon})\| \sqrt{V(\mathbf{p}(\bm{\epsilon} \mid \mathbf{z}))})\\
\end{split}
\end{equation}
where $\frac{1}{m}V(x)$ denote the variance of vector $x$. For the second summand in Eq.\ref{eq:separate}.
\begin{equation}
\begin{split}
    \left|\sum_{\mathbf{z}}(p(\mathbf{z})-\hat{p}(z)) \hat{H}(Y \mid z)\right| \leq\|\mathbf{p}(\mathbf{z})-\hat{\mathbf{p}}(\mathbf{z})\| \cdot \sqrt{V(\hat{\mathbf{H}}(Y \mid \mathbf{z}))}
\end{split}
\end{equation}
For the summand $|H(Y)-\hat{H}(Y)|$:
\begin{equation}
\begin{split}
|H(Y)-\hat{H}(Y)| &=\left|\sum_{y} p(y) \log (p(y))-\hat{p}(y) \log (\hat{p}(y))\right| \\
& \leq \sum_{y} \zeta(|p(y)-\hat{p}(y)|) \\
&=\sum_{y} \zeta\left(\left|\sum_{\mathbf{z}} \sum_{\bm{\epsilon}}p(\bm{\epsilon} \mid \mathbf{z})(p(\mathbf{z})p( y|\bm{\epsilon})-\hat{p}(\mathbf{z})p( y|\bm{\epsilon}))\right|\right) \\
& \leq \sum_{y} \zeta(\|\mathbf{p}(\mathbf{z})p(y|\bm{\epsilon})-\hat{\mathbf{p}}(\mathbf{z})p(y|\bm{\epsilon})\| \sqrt{V(\mathbf{p}(\bm{\epsilon} \mid \mathbf{z}))})
\end{split}
\end{equation}

Combining above bounds:
\begin{equation}
\begin{split}
    |I(Y;\mathbf{Z})- \hat{I}(Y;\mathbf{Z})|\le&\sum_{y} \zeta(\|\mathbf{p}(\mathbf{z}, y|\bm{\epsilon})-\hat{\mathbf{p}}(\mathbf{z}, y|\bm{\epsilon})\|\sqrt{V(\mathbf{p}(\bm{\epsilon} \mid \mathbf{z}))}) \\
    &+\sum_{\mathbf{z}} p(\mathbf{z}) \sum_{y} \zeta(\|\hat{\mathbf{p}}(y \mid \mathbf{z}, \epsilon)-\mathbf{p}(y \mid \mathbf{z}, \bm{\bm{\epsilon}})\| \sqrt{V(\mathbf{p}(\bm{\bm{\epsilon}} \mid z))})\\
    &+\|\mathbf{p}(\mathbf{z})-\hat{\mathbf{p}}(\mathbf{z})\| \cdot \sqrt{V(\hat{\mathbf{H}}(Y \mid \mathbf{z}))}
\end{split}
\end{equation}
Let $\bm{\rho}$ be a distribution vector of arbitrary cardinality, and let $\hat{\bm{\rho}}$ be an empirical estimation of $\bm{\rho}$ based on a sample of size $m$. Then the error $\|\bm{\rho}-\hat{\bm{\rho}}\|$ will be bounded  with a probability of at least $1-\delta$
\begin{equation}\label{eq:delta_}
\begin{split}
\|\bm{\rho}-\hat{\bm{\rho}}\| \leq \frac{2+\sqrt{2 \log (1 / \delta)}}{\sqrt{m}}
\end{split}
\end{equation}
Following the proof of Theorem 3 in \cite{shamir2010learning}, to make sure the bounds hold over $|\mathcal{Y}| + 2|$  quantities, we replace $\delta$ in Eq.\ref{eq:delta_} by $\delta/(|\mathcal{Y}| + 2|$, than substitute $\|\mathbf{p}(\mathbf{z}, y|\bm{\epsilon})-\hat{\mathbf{p}}(\mathbf{z}, y|\bm{\epsilon})\|$ $\|\hat{\mathbf{p}}(y \mid \mathbf{z}, \bm{\epsilon})-\mathbf{p}(y \mid \mathbf{z}, \bm{\epsilon})\|, \|\mathbf{p}(\mathbf{z})-\hat{\mathbf{p}}(\mathbf{z})\|, $ by Eq.\ref{eq:delta_}.
\begin{equation}\label{eq:i_y_z}
\begin{split}
    |I(Y ; \mathbf{Z})-\hat{I}(Y ; \mathbf{Z})| \leq &(2+\sqrt{2 \log ((|\mathcal{Y}|+2) / \delta)}) \sqrt{\frac{V(\hat{\mathbf{H}}(Y \mid \mathbf{z}))}{m}} \\
    &+2 |\mathcal{Y}| h\left(2+\sqrt{2 \log ((|\mathcal{Y}|+2) / \delta)} \sqrt{\frac{V(\mathbf{p}(\bm{\epsilon} \mid\mathbf{z}))}{m}}\right)\\
\end{split}
\end{equation}
There exist a constant $C$, where $2+\sqrt{2 \log ((|\mathcal{Y}|+2) / \delta)}\le\sqrt{C\log ((|\mathcal{Y}|) / \delta)}$. 
From the fact that variance of any random variable bounded in [0, 1] is at most 1/4, we analyze the bound under two different cases:

\textbf{In general case} ($\mathbf{z}=\phi(\mathbf{x})$ is arbitrary representation of $\mathbf{x}$), 
\begin{equation}
\begin{split}
    V(\mathbf{p}(\bm{\epsilon} \mid \mathbf{z}))\le \frac{|\mathcal{Z}|}{4}
\end{split}
\end{equation}
let $m$ denote the number of sample, we get a lower bound of $m$, which is also known as sample complexity.
\begin{equation}
\begin{split}
    m \geq \frac{C}{4} \log (|\mathcal{Y}| / \delta)|\mathcal{Z}| \mathrm{e}^{2}
\end{split}
\end{equation}

\textbf{In ideal case}($\mathbf{z}$ is sufficient cause of $\mathbf{x}$) in that case $\mathbf{z}$ is independent with the exogenous noise $\bm{\epsilon}$, $\mathbf{z}\perp\bm{\epsilon}$:
\begin{equation}
\begin{split}
    V(\mathbf{p}(\bm{\epsilon} \mid \mathbf{z}))\le \beta
\end{split}
\end{equation}
\begin{equation}
\begin{split}
    m \geq \frac{C}{4} \log (|\mathcal{Y}| / \delta)|\beta| \mathrm{e}^{2}
\end{split}
\end{equation}

\begin{equation}
    \sqrt{\frac{C \log (|\mathcal{Y}| / \delta) V(\mathbf{p}(\epsilon \mid \mathbf{z}))}{m}} \leq \sqrt{\frac{C \log (|\mathcal{Y}| / \delta)|\mathcal{Z}|}{4m}} \leq 1 / \mathrm{e}
\end{equation}
Then, from the fact that (\cite{shamir2010learning}):
\begin{equation}
\begin{split}
h\left(\sqrt{\frac{\nu}{m}}\right) &=\left(\sqrt{\frac{\nu}{m}} \log \left(\sqrt{\frac{m}{v}}\right)\right) \\
& \leq \frac{\sqrt{v} \log (\sqrt{m})+1 / \mathrm{e}}{\sqrt{m}},
\end{split}
\end{equation}
We can get the upper bound of second summand in Eq.\ref{eq:i_y_z} as follows
\begin{equation}\label{eq:yh}
\begin{split}
    &\sum_y h\left(\sqrt{C\log ((|\mathcal{Y}|) / \delta)} \sqrt{\frac{V(\mathbf{p}(\epsilon \mid z))}{m}}\right)\\
    \le& \frac{\sqrt{C \log (|\mathcal{Y}| / \delta)} \log (m)\left(|\mathcal{Y}|\sqrt{V(\mathbf{p}(\bm{\epsilon} \mid \mathbf{z}))}\right)+\frac{2}{\mathrm{e}}|\mathcal{Y}|}{2 \sqrt{m}}
\end{split}
\end{equation}
\textbf{In general case}:
\begin{equation}
\begin{split}
    Eq.\ref{eq:yh} \le \frac{\sqrt{C \log (|\mathcal{Y}| / \delta)} \log (m)\left(|\mathcal{Y}|\sqrt{\mathcal{|Z|}}\right)+\frac{2}{\mathrm{e}}|\mathcal{Y}|}{2 \sqrt{m}}
\end{split}
\end{equation}

\textbf{In ideal case}:
\begin{equation}
\begin{split}
    Eq.\ref{eq:yh} \le \frac{\sqrt{C \log (|\mathcal{Y}| / \delta)} \log (m)\left(|\mathcal{Y}|\sqrt{\beta}\right)+\frac{2}{\mathrm{e}}|\mathcal{Y}|}{2 \sqrt{m}}
\end{split}
\end{equation}
For the first summand in Eq.\ref{eq:i_y_z}, we follow the fact (\cite{shamir2010learning} Theorem 3) that:
\begin{equation}
\begin{split}
    V(\mathbf{H}(Y \mid \mathbf{z})) \leq \frac{|Z| \log ^{2}(|\mathcal{Y}|)}{4}
\end{split}
\end{equation}
Finally we accomplish the proof of Theorem \ref{thm:sample_complexity}.

\subsection{Extension of Theorem \ref{thm:sample_complexity} for distribution shift}

\begin{proof}
The risk under target domain is defined as $|I_\mathcal{T}(Y ; \mathbf{Z})-\hat{I}_\mathcal{T}(Y ; \mathbf{Z})| $, the proof is start by the following equation shown in the proof of Theorem \ref{thm:sample_complexity}.
\begin{equation}\label{eq:i_y_z}
\begin{split}
    |I_\mathcal{T}(Y ; \mathbf{Z})-\hat{I}_\mathcal{T}(Y ; \mathbf{Z})| \leq &(2+\sqrt{2 \log ((|\mathcal{Y}|+2) / \delta)}) \sqrt{\frac{V(\hat{\mathbf{H}}_\mathcal{T}(Y \mid \mathbf{z}))}{m}} \\
    &+2 |\mathcal{Y}| h\left(2+\sqrt{2 \log ((|\mathcal{Y}|+2) / \delta)} \sqrt{\frac{V(\mathbf{p}(\bm{\epsilon} \mid\mathbf{z}))}{m}}\right)\\
\end{split}
\end{equation}
We will then bound the term  $V(\hat{\mathbf{H}}_\mathcal{T}(Y \mid \mathbf{z}))$ by the variance of entropy on source data. From the definition of function $V$, we have
\begin{equation}\label{eq:v_h_y|z}
\begin{split}
    \sqrt{V(\hat{\mathbf{H}}(Y \mid \mathbf{z}))} &\leq \sqrt{\sum_{y}(\hat{H}(Y \mid \mathbf{z})-\hat{H}(Y))^{2}}+\sqrt{\sum_{y}\left(\hat{H}(Y)-\frac{1}{|\mathcal{Z}|} \sum_{z^{\prime}} \hat{H}\left(Y \mid \mathbf{z}^{\prime}\right)\right)^{2}}\\
    &\le \left(1+\frac{1}{\sqrt{|\mathcal{Z}|}}\right)\left|\sum_{z^{\prime}}\left(\hat{H}(Y)-\hat{H}\left(Y \mid \mathbf{z}^{\prime}\right)\right)\right|\\
    &=\left(1+\frac{1}{\sqrt{|\mathcal{Z}|}}\right)\frac{1}{\min _\mathbf{z} p(\mathbf{z})}\left(\hat{H}(Y)-\sum_{\mathbf{z}} p(y) \hat{H}(Y\mid \mathbf{z})\right)\\
    &=\left(1+\frac{1}{\sqrt{|\mathcal{Z}|}}\right)\frac{1}{\min _{\mathbf{z}} p(\mathbf{z})} \hat{I}(\mathbf{Z} ; Y) 
\end{split}
\end{equation}
Supposing that only source data $\mathcal{S}(\mathbf{X}, Y)$ is available, for the term $\hat{I}_\mathcal{T}(\mathbf{Z} ; Y) $ evaluated on target dataset, we change the measure by important sampling and Jensen's inequality. The way helps bound $\hat{I}_\mathcal{T}(\mathbf{Z} ; Y)$ by the evaluation on source domain. Denoting $D_{KL}(P||Q)$ by the Kullback-Leibler divergence between distribution $P$ and $Q$. $\mathcal{S}(\mathbf{z}, y)$ and $\mathcal{T}(\mathbf{z}, y)$ are the distribution of $p(\mathbf{z}, Y)$ on source domain and target domain separately.
\begin{equation}
\begin{split}
    \hat{I}_\mathcal{T}(\mathbf{Z}, Y) &= \mathbb{E}_{\mathcal{T}(\mathbf{z}, y)} \log \frac{\mathbf{p}(\mathbf{z}, y)}{\mathbf{p}(\mathbf{z}) \mathbf{p}(y)}\\
    &\le D_{KL}(\mathcal{T}(\mathbf{z}, y) || \mathcal{S}(\mathbf{z}, y)) + \log \mathbb{E}_{\mathcal{S}(\mathbf{z}, y)} \frac{\hat{p}(\mathbf{z}, y)}{\hat{p}(\mathbf{z}) \hat{p}(y)}
\end{split}
\end{equation}
Substituting $\hat{I}_\mathcal{T}(\mathbf{Z}, Y)$ into Eq. \ref{eq:v_h_y|z} and Eq. \ref{eq:i_y_z}. Since $|\mathcal{Z}|>1$, let $D = \frac{2}{\min _{\mathbf{z}} p(\mathbf{z})}$ and $I_\mathcal{S} = \mathbb{E}_{\mathcal{S}(\mathbf{z}, y)} \frac{\hat{p}(\mathbf{z}, y)}{\hat{p}(\mathbf{z}) \hat{p}(y)}$. We can get the bounds under two different cases:

\textbf{In general case}, since since $\mathbf{Z}$ is arbitrary representation of $\mathbf{X}$, we get $D_{KL}(\mathcal{T}(\mathbf{z}, y) || \mathcal{S}(\mathbf{z}, y)) > 0$. We cannot drop the $D_{KL}$ term. Thus we have the bound:
\begin{equation}
    |I_\mathcal{T}(Y ; \mathbf{Z})-\hat{I}_\mathcal{T}(Y ; \mathbf{Z})| \leq
    \frac{\sqrt{C \log (|\mathcal{Y}| / \delta)}\left(|\mathcal{Y}|\sqrt{|\mathcal{Z}|} \log (m)+ D_{KL}(\mathcal{T}||\mathcal{S}) + D I_\mathcal{S})\right)+\frac{2}{e}|\mathcal{Y}|}{\sqrt{m}}
\end{equation}

\textbf{In ideal case}, since $\mathbf{Z}$ is sufficient cause of $Y$, we get $D_{KL}(\mathcal{T}(\mathbf{z}, y) || \mathcal{S}(\mathbf{z}, y)) = 0$ from Assumption \ref{asp:distribution_shift}.
\begin{equation}
    |I_\mathcal{T}(Y ; \mathbf{Z})-\hat{I}_\mathcal{T}(Y ; \mathbf{Z})| \leq
    \frac{\sqrt{C \log (|\mathcal{Y}| / \delta)}\left(|\mathcal{Y}|\sqrt{|\beta|} \log (m)+  D I_\mathcal{S})\right)+\frac{2}{e}|\mathcal{Y}|}{\sqrt{m}}
\end{equation}
\end{proof}

\section{Experimental Details} \label{sup:exp_details}

\subsection{Model Architecture and Implementation Details}
The hyper-parameters are determined by grid search.
Specifically, the learning rate and batch size are tuned in the ranges of {{$[10^{-4}, 10^{-1}]$} and $[64,128,256,512,1024]$}, respectively.
The weighting parameter $\lambda$ is tuned in $[0.001]$. Perturbation degrees are set to be $\beta=\{0.1, 0.2, 0.1, 0.3\}$ for Coat, Yahoo!R3, PCIC and CPC separately. The representation dimension is empirically set as 64. All the experiments are conducted based on a server with a 16-core CPU, 128g memories and an RTX 5000 GPU. The deep model architecture is shown as follows:

(1)Representation learning method $\phi(\mathbf{x})$:
If the dataset is Yahoo!R3 or PCIC, in which only the user id and item id are the input, we first use an embedding layer. The representation function architecture is:
\begin{itemize}
    \item Concat(Embedding(user id, 32), Embedding(item id, 32))
    \item Linear(64, 64), ELU()
    \item Linear(64, representation dim), ELU()
\end{itemize}
Then for the dataset Coat and CPC, the feature dimension is 29 and 47 separately. It do not use embedding layer at first. The representation function architecture is.
\begin{itemize}
    \item Linear(64, 64), ELU()
    \item Linear(64, representation dim), ELU()
\end{itemize}
(2)Downstream Prediction Model $g(\mathbf{z})$:
\begin{itemize}
\item Linear(representation dim, 64), ELU()
\item Linear(64, 2)
\end{itemize}
\begin{figure*}[h]
\begin{center}
\centerline{\includegraphics[width=0.6\columnwidth]{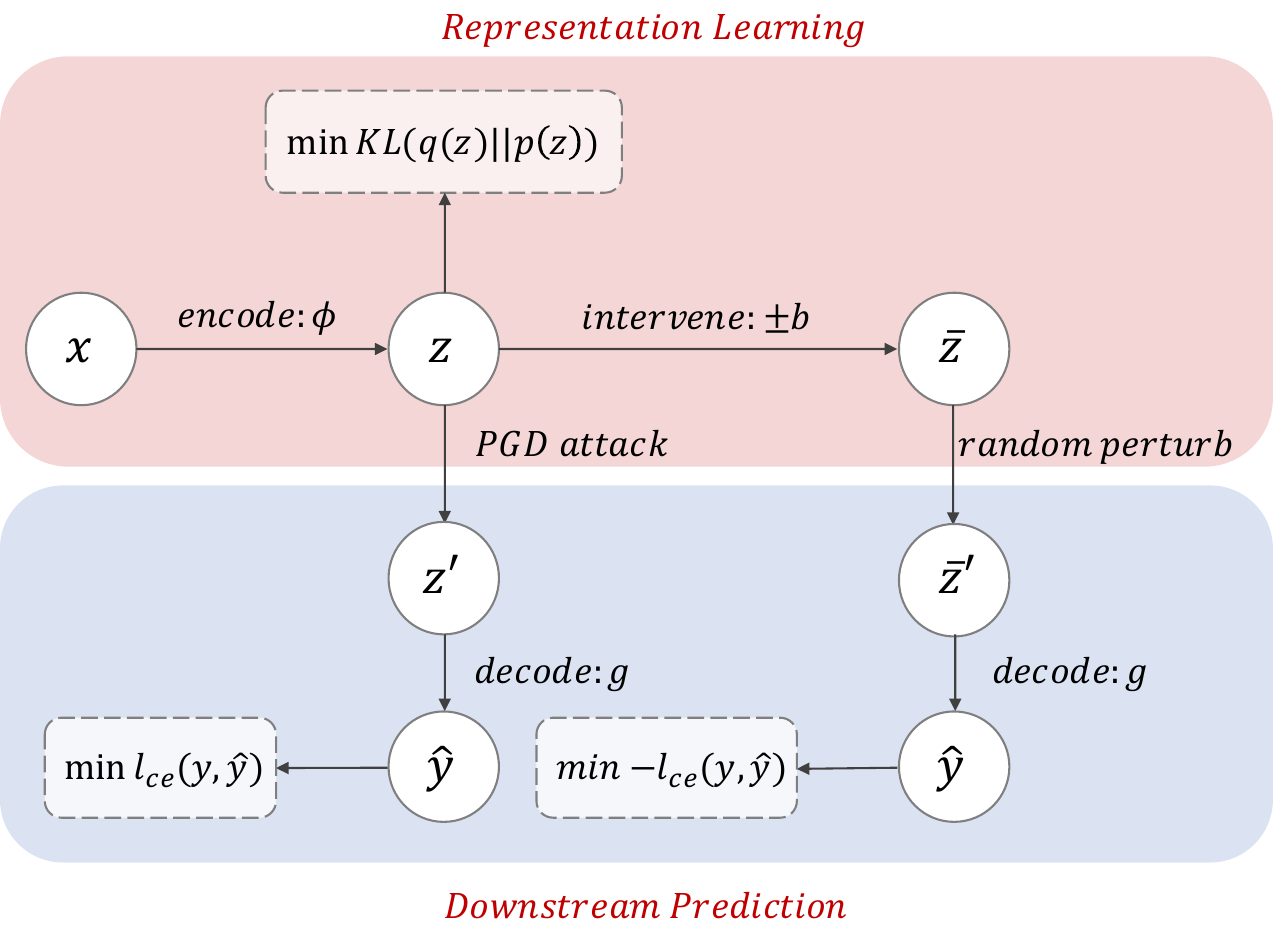}}
\caption{The figure demonstrates the model architecture of CaRI}
\label{fig:archi}
\end{center}
\end{figure*}

\section{Additional Results} \label{sup:exp_results}
Due to the page limit in main text, we present the additional test results and analysis in this section. Table \ref{tab:pcic_std} overall results with standard error via 5 tims runs.
Fig. \ref{fig:6} and \ref{fig:7} compare the distance correlation metric given by the training under standard and robust mode. It shows that our method performs consistently better compared with base methods in both modes, with a higher distance correlation, under smaller variance. The gap is obvious especially in the learning of parental information, which is the main focus of our approaches.

Fig. \ref{fig:8} and \ref{fig:9} record the results along optimization process and until convergence, under different settings of the pertubation degree $\beta$, considering the dataset CelebA-anno. The annotation smile is used as the label to be prediced, and other features are the source data.  It shows that when the optimization process is not finished, both approaches have similar performance, with unstability evidenced by large variance of the DC metric. However, our method outperforms the baseline when the optimization converges, owning a higher DC with smaller variations.  The results also show that $\beta$ is an important factor for training the model. Larger $\beta$ often leads to higher variance of the training of the model.

Fig.\ref{fig:epsilon_result} demonstrates how robust training  degree ($\beta=\{0.1, 0.3, 0.5, 0.7, 1.0\}$) influences the downstream prediction under adversarial settings. We conduct the experiments on the attacked real-world dataset by PGD attacker. From Fig.\ref{fig:epsilon_result}, we find that our method is better than base method, because the base model's ability on standard prediction is broken by adversarial training. When $\beta$ is small, our method behaves closely to the r-CVAE in all the datasets. When $\beta$ gets larger, the difference between performance of CaRI and that of r-CVAE continuously enlarges in Yahoo!R3. In PCIC, the gap becomes the largest among all when $\beta=0.5$, and narrows down to 0 when $\beta=0.7$.  This is because in our framework, we explicitly deploy a model to achieve more robust representations, while others fail. 
\begin{figure*}[h]
\begin{center}
\centerline{\includegraphics[width=0.8\columnwidth]{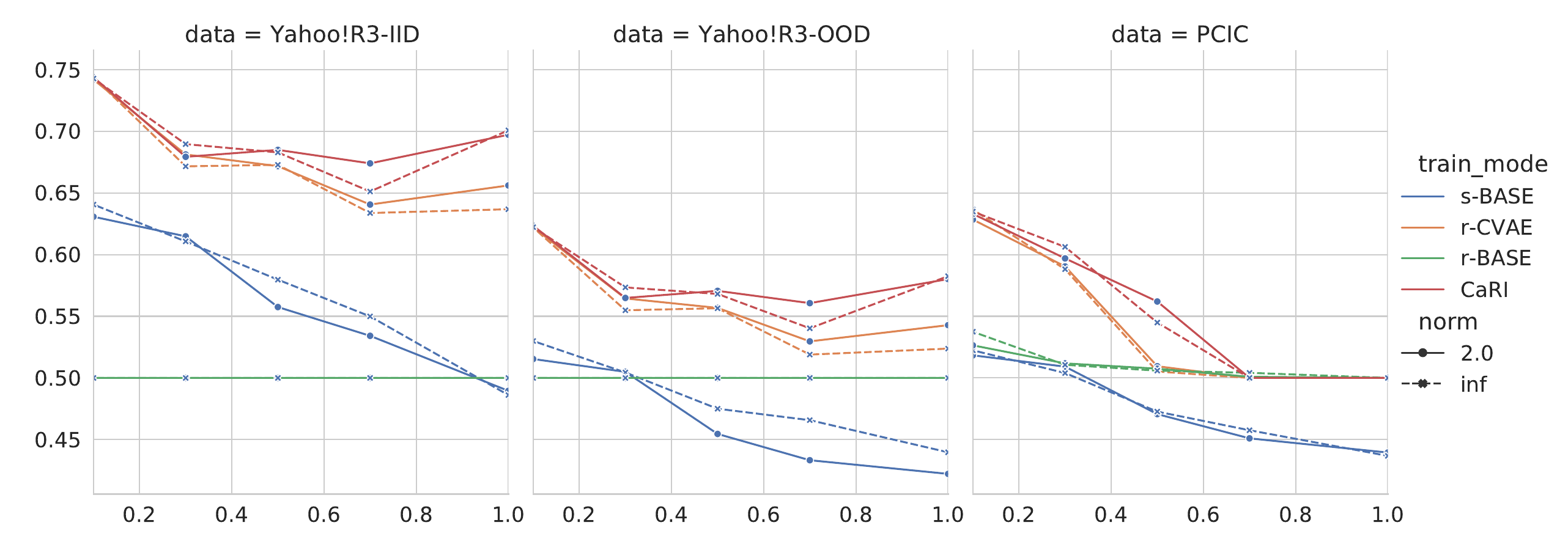}}
\caption{ Results under  different adversarial perturbations $\beta$ on three datasets. Axis-x is the attack degree $\beta$. Axis-y is the adv-AUC under attacked test datasets.}
\label{fig:epsilon_result}
\end{center}
\vspace{-0mm}
\end{figure*}

        

\begin{table*}[h]
    \center
\caption{Additional overall results with standard error.}
\renewcommand\arraystretch{1.1}
\setlength{\tabcolsep}{3.3pt}
\begin{threeparttable}
\scalebox{.95}{
    \begin{tabular}{c|c|c|c|cc|cc|cc|cc}
    \hline\hline
dataset &     &     &      & AUC    & std    & ACC    & std    & adv\_AUC & std    & adv\_ACC & std    \\
\hline
\multirow{8}{*}{PCIC} & \multirow{4}{*}{standard} & \multirow{2}{*}{p=2}   & CaRI & 0.6416 & 0.0078 & 0.6803 & 0.0014 & 0.619    & 0.004  & 0.6625   & 0.0041 \\
         & &     & r-CVAE & 0.6328 & 0.0023 & 0.6725 & 0.0042 & 0.5893   & 0.0419 & 0.6429   & 0.0201 \\
         & &  \multirow{2}{*}{p=$\infty$} & CaRI & 0.6447 & 0.0041 & 0.6817 & 0.0043 & 0.6148   & 0.011  & 0.664    & 0.0104 \\
         & &     & r-CVAE & 0.6358 & 0.014  & 0.6779 & 0.0066 & 0.6138   & 0.0062 & 0.6601   & 0.0048 \\
&\multirow{4}{*}{robust}   & \multirow{2}{*}{p=2}   & CaRI & 0.6363 & 0.0045 & 0.6709 & 0.0042 & 0.6332   & 0.0024 & 0.6576   & 0.0006 \\
         & &     & r-CVAE & 0.63   & 0.0075 & 0.674  & 0.0069 & 0.6187   & 0.0051 & 0.6493   & 0.0013 \\
         & & \multirow{2}{*}{p=$\infty$} & CaRI & 0.639  & 0.007  & 0.6761 & 0.0024 & 0.6225   & 0.0057 & 0.6638   & 0.001  \\
         & &     & r-CVAE & 0.6363 & 0.0066 & 0.6733 & 0.0058 & 0.6088   & 0.0098 & 0.6596   & 0.0124\\
         \hline
         \multirow{8}{*}{Yahoo!R3 OOD}&\multirow{4}{*}{standard} & \multirow{2}{*}{p=2}   & CaRI & 0.6276 & 0.0001 & 0.6255 & 0.0022 & 0.5917 & 0.0071 & 0.5917 & 0.0072 \\
         & &     & r-CVAE & 0.6233 & 0.0005 & 0.6243 & 0.002  & 0.5865 & 0.0022 & 0.5872 & 0.0025 \\
         
         & &  \multirow{2}{*}{p=$\infty$} & CaRI & 0.629  & 0.0011 & 0.6257 & 0.0002 & 0.5966 & 0.0049 & 0.5965 & 0.0042 \\
         & &     & r-CVAE & 0.6253 & 0.0023 & 0.6249 & 0.0014 & 0.5855 & 0.0016 & 0.5863 & 0.0019 \\
& \multirow{4}{*}{robust}   & \multirow{2}{*}{p=2}   & CaRI & 0.6242 & 0.0009 & 0.6307 & 0.0012 & 0.6008 & 0.0009 & 0.601  & 0.0016  \\
         & &     & r-CVAE & 0.6191 & 0.0013 & 0.6241 & 0.0051 & 0.5882 & 0.0014 & 0.5907 & 0.0009 \\
         & & \multirow{2}{*}{p=$\infty$} & CaRI & 0.6238 & 0.0011 & 0.6284 & 0.0017 & 0.5993 & 0.0019 & 0.5999 & 0.0026  \\
         & &     & r-CVAE & 0.6186 & 0.001  & 0.6235 & 0.0028 & 0.5886 & 0.0014 & 0.5912 & 0.0012\\
         \hline
         \multirow{8}{*}{Yahoo!R3 i.i.d.}&\multirow{4}{*}{standard} & \multirow{2}{*}{p=2}   & CaRI & 0.7493 & 0.0004 & 0.7495 & 0.0015 & 0.7188 & 0.0015 & 0.7072 & 0.0013 \\
         & &     & r-CVAE & 0.7487 & 0.0001 & 0.7529 & 0.0027 & 0.7202 & 0.0029 & 0.7099 & 0.0027 \\
         
         & &  \multirow{2}{*}{p=$\infty$} & CaRI & 0.7497 & 0.0004 & 0.7503 & 0.0019 & 0.7191 & 0.0023 & 0.7099 & 0.0026 \\
         & &     & r-CVAE & 0.7488 & 0.0001 & 0.7515 & 0.0008 & 0.7191 & 0.0021 & 0.7072 & 0.0015 \\
& \multirow{4}{*}{robust}   & \multirow{2}{*}{p=2}   & CaRI & 0.7374 & 0.0024 & 0.7158 & 0.0061 & 0.7247 & 0.0026 & 0.7159 & 0.0036  \\
         & &     & r-CVAE & 0.7376 & 0.0018 & 0.7151 & 0.0045 & 0.7194 & 0.0020 & 0.7082 & 0.0021 \\
         & & \multirow{2}{*}{p=$\infty$} & CaRI & 0.7378 & 0.0015 & 0.7168 & 0.0015 & 0.7210 & 0.0031 & 0.7107 & 0.0040  \\
         & &     & r-CVAE & 0.7341 & 0.0007 & 0.7093 & 0.0035 & 0.7180 & 0.0017 & 0.7080 & 0.0016\\
         \hline
         \multirow{8}{*}{Coat OOD}&\multirow{4}{*}{standard} & \multirow{2}{*}{p=2}   & CaRI & 0.5725 & 0.0005 & 0.5732 & 0.0005 & 0.5608 & 0.0003 & 0.5601 & 0.0004 \\
         & &     & r-CVAE & 0.5671 & 0.0005 & 0.5649 & 0.0006 & 0.5586 & 0.0002 & 0.554  & 0.0001 \\
         
         & &  \multirow{2}{*}{p=$\infty$} & CaRI & 0.5705 & 0.0013 & 0.5718 & 0.0017 & 0.5643 & 0.0001 & 0.5659 & 0.0006 \\
         & &     & r-CVAE & 0.5656 & 0.0005 & 0.5643 & 0.0007 & 0.5527 & 0.0074 & 0.5478 & 0.0081 \\
& \multirow{4}{*}{robust}   & \multirow{2}{*}{p=2}   & CaRI & 0.5705 & 0.0015 & 0.5675 & 0.0015 & 0.5674 & 0.0002 & 0.565  & 0.0012  \\
         & &     & r-CVAE & 0.5634 & 0.0014 & 0.5591 & 0.0018 & 0.5572 & 0.0009 & 0.5522 & 0.0003 \\
         & & \multirow{2}{*}{p=$\infty$} & CaRI & 0.5707 & 0.0017 & 0.5681 & 0.0024 & 0.5653 & 0.0019 & 0.5659 & 0.0011  \\
         & &     & r-CVAE & 0.5629 & 0.0017 & 0.5586 & 0.0028 & 0.559  & 0.0004 & 0.5544 & 0.0007\\
         \hline
         \multirow{8}{*}{Coat i.i.d.}&\multirow{4}{*}{standard} & \multirow{2}{*}{p=2}   & CaRI & 0.7248 & 0.0011 & 0.7305 & 0.0016 & 0.7069 & 0.0023 & 0.7125 & 0.0036 \\
         & &     & r-CVAE & 0.7129 & 0.0009 & 0.7206 & 0.0022 & 0.7023 & 0.0041 & 0.7059 & 0.0061 \\
         
         & &  \multirow{2}{*}{p=$\infty$} & CaRI & 0.7283 & 0.0013 & 0.7355 & 0.0015 & 0.7125 & 0.0007 & 0.7196 & 0.001 \\
         & &     & r-CVAE & 0.7106 & 0.0029 & 0.7184 & 0.0033 & 0.7029 & 0.0008 & 0.7106 & 0.0094 \\
& \multirow{4}{*}{robust}   & \multirow{2}{*}{p=2}   & CaRI & 0.7265 & 0.0032 & 0.7331 & 0.0027 & 0.7196 & 0.0046 & 0.7261 & 0.0042  \\
         & &     & r-CVAE & 0.7087 & 0.0005 & 0.7169 & 0.0016 & 0.7058 & 0.002  & 0.7141 & 0.0036 \\
         & & \multirow{2}{*}{p=$\infty$} & CaRI & 0.7276 & 0.0028 & 0.7339 & 0.002  & 0.7208 & 0.0023 & 0.727  & 0.0019  \\
         & &     & r-CVAE & 0.7147 & 0.0023 & 0.7222 & 0.0026 & 0.7105 & 0.0039 & 0.7181 & 0.0043 \\
         \hline\hline
    \end{tabular}
}   
\end{threeparttable}    
\label{tab:pcic_std}

\end{table*}

\begin{figure*}[h]
\begin{center}
\centerline{\includegraphics[width=0.8\columnwidth]{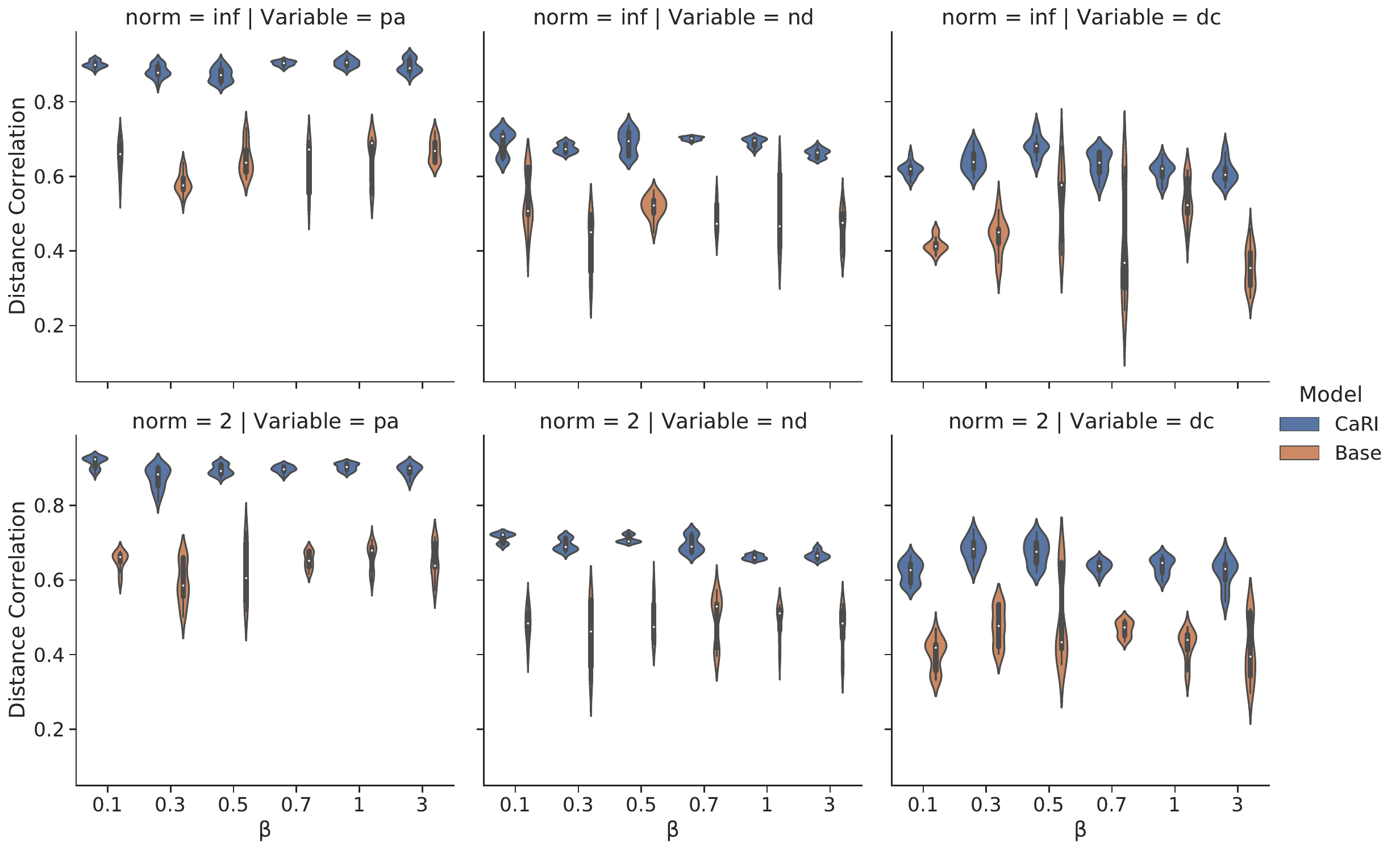}}
\caption{Identify results on synthetic dataset over different range of $\beta$ under robust training.}
\label{fig:6}
\end{center}
\end{figure*}

\begin{figure*}[h]
\begin{center}
\centerline{\includegraphics[width=0.8\columnwidth]{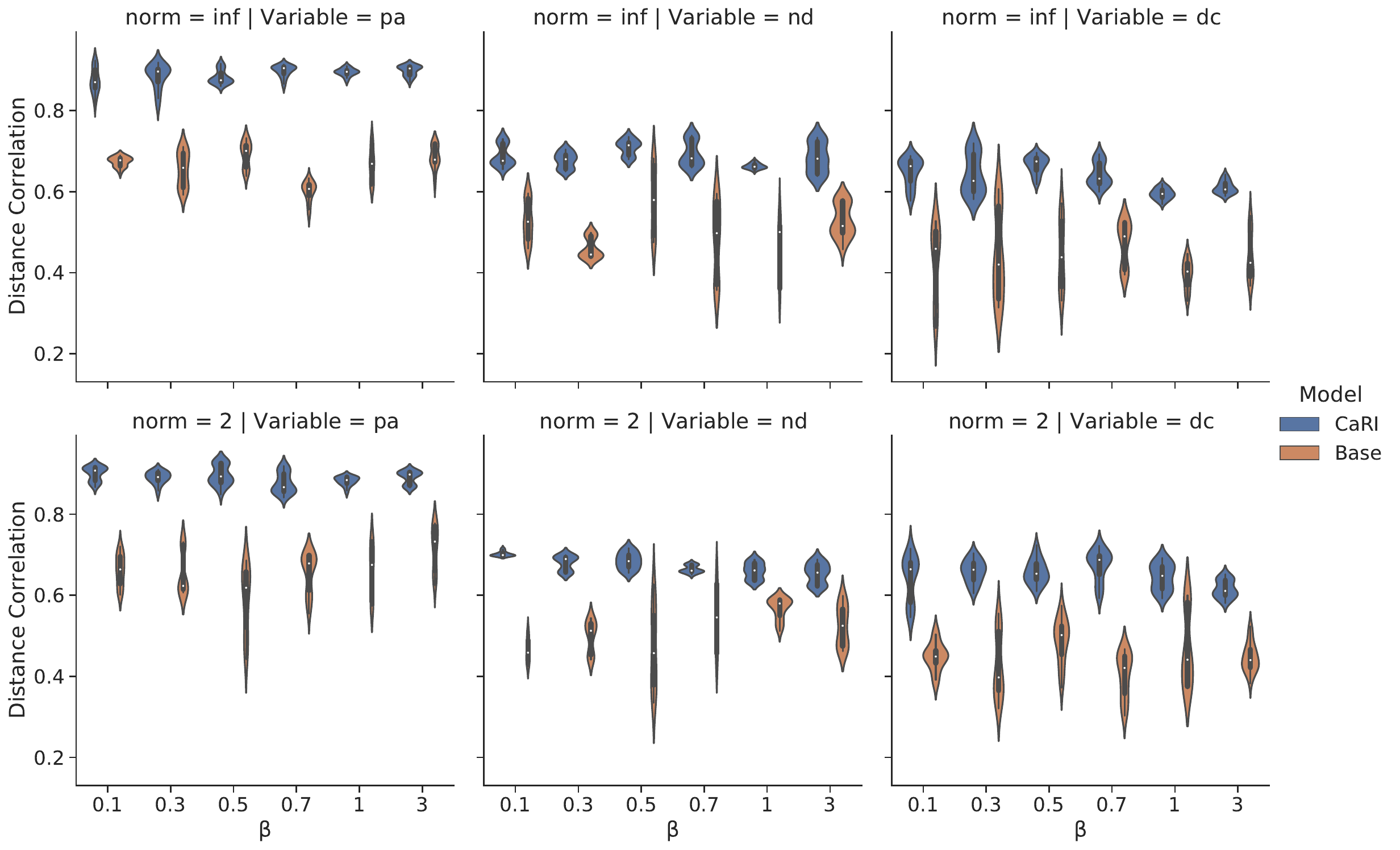}}
\caption{Identify results on synthetic dataset over different range of $\beta$ under standard training.}
\label{fig:7}
\end{center}
\end{figure*}

\begin{figure*}[h]
\begin{center}
\centerline{\includegraphics[width=0.8\columnwidth]{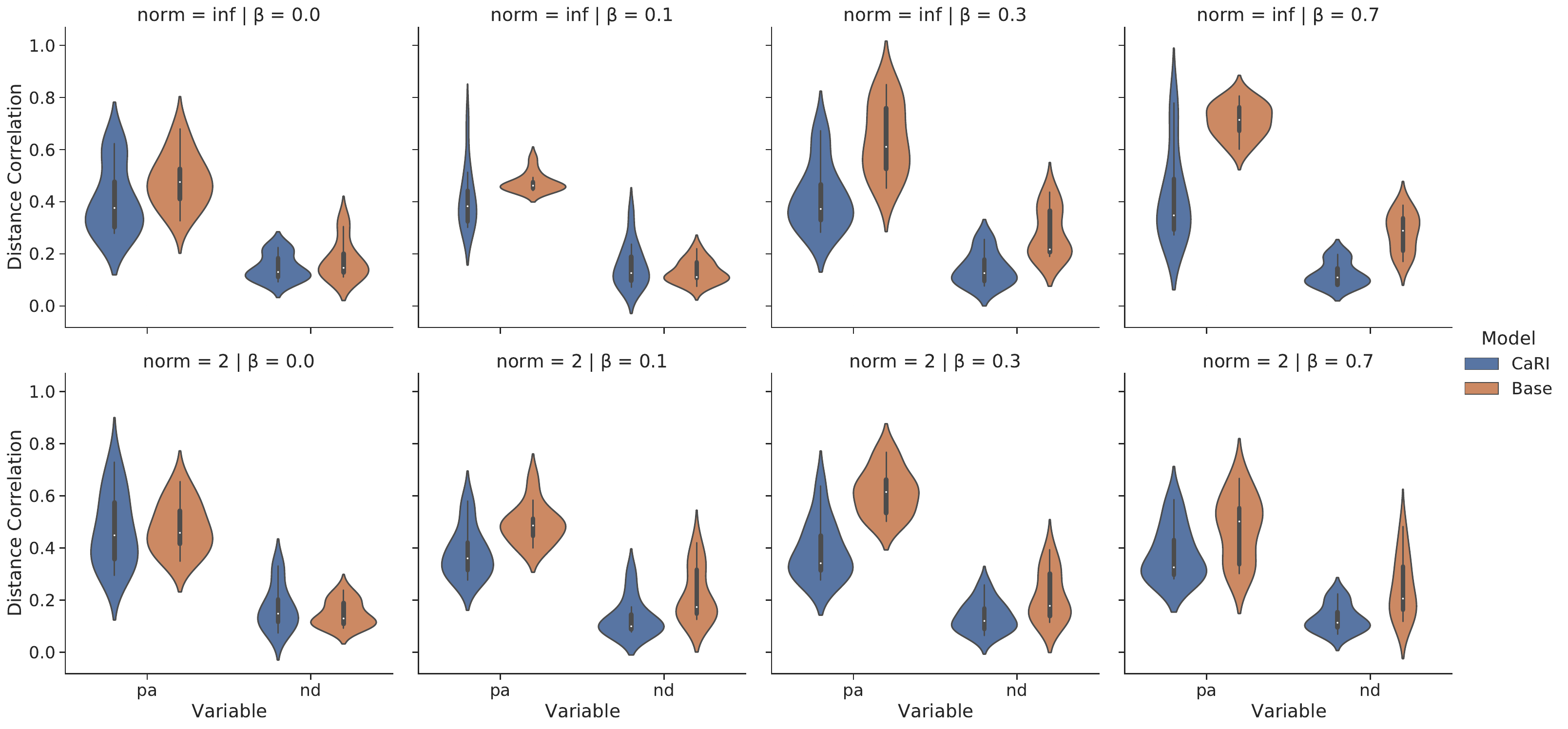}}
\caption{Identify results on CelebA-anno dataset over different range of $\beta$ during early optimization step.}
\label{fig:8}
\end{center}
\end{figure*}

\begin{figure*}[h]
\begin{center}
\centerline{\includegraphics[width=0.8\columnwidth]{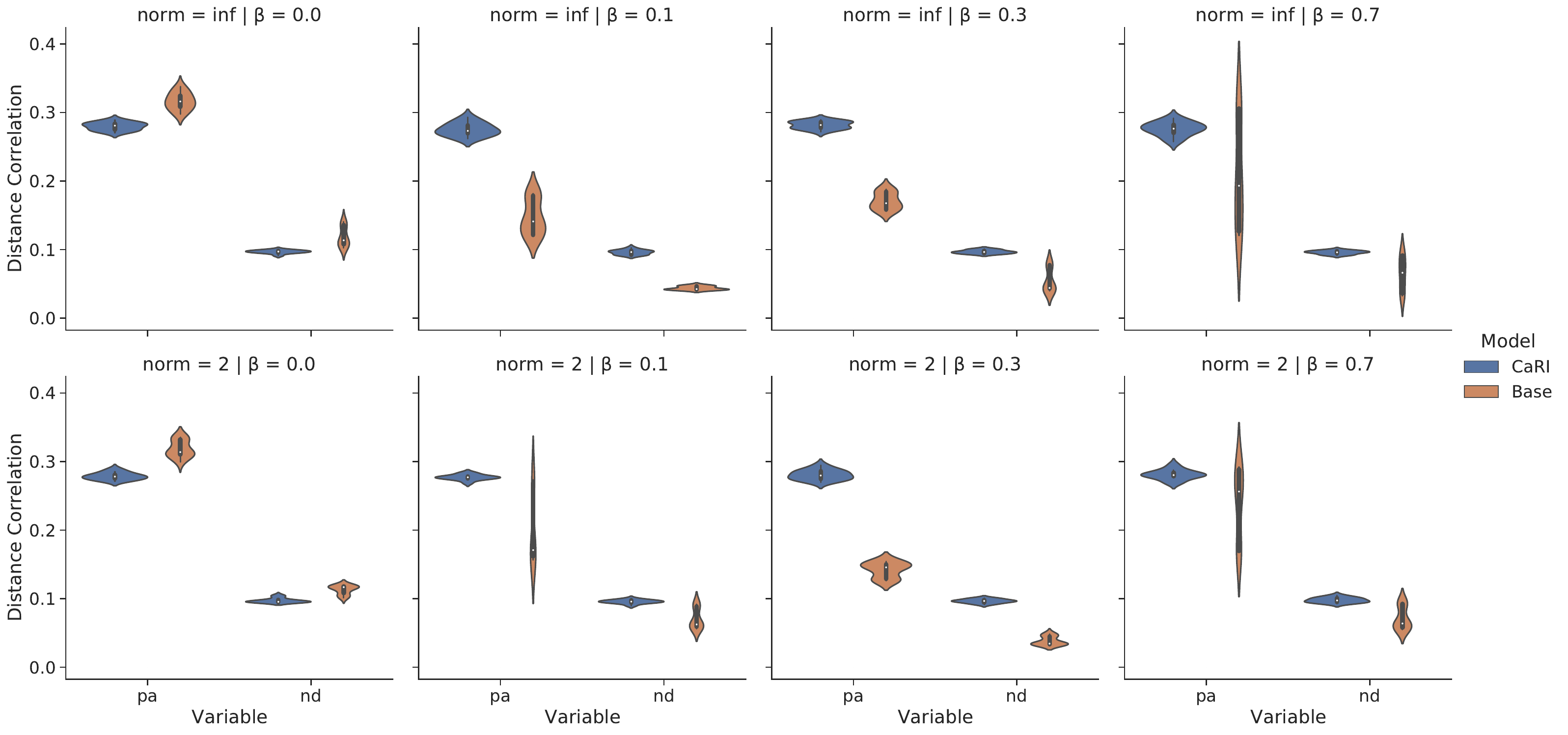}}
\caption{Identify results on CelebA-anno dataset over different range of $\beta$ after converging.}
\label{fig:9}
\end{center}
\end{figure*}

\subsection{Future Works} For future works, one promising direction is to involve the concept of Kolmogorov complexity in information theory. Different to mutual information and information entropy, Kolmogorov complexity is an asymmetric notion. Based on such a concept, we can develop a causal representation learning method without introducing an intervention network. Another direction is that our proposed method can be generalized to a mixture of anti-causal and causal learning frameworks where observation data contains both parents and descendants of outcome label $Y$. The information-theoretic-based sample complexity theorem can inspire the generalization error/risk analysis on causal representation learning and causal structure learning. 
Lastly, this paper is based on the assumption of the given causal graph Fig. \ref{fig:intro}. In the future, it is interesting to extend our method to more complex scenarios like sequential prediction, reinforcement learning etc. 

\end{document}